\documentclass{article}

\PassOptionsToPackage{numbers,sort&compress}{natbib}

\usepackage[preprint]{neurips_2026}

\usepackage[utf8]{inputenc} % allow utf-8 input
\usepackage[T1]{fontenc}    % use 8-bit T1 fonts
\usepackage{hyperref}       % hyperlinks
\usepackage{url}            % simple URL typesetting
\usepackage{booktabs}       % professional-quality tables
\usepackage{amsfonts}       % blackboard math symbols
\usepackage{nicefrac}       % compact symbols for 1/2, etc.
\usepackage{microtype}      % microtypography
\usepackage{xcolor}         % colors
 \usepackage{graphicx}
 \usepackage{caption}
 \usepackage{booktabs}
\usepackage{colortbl}
\usepackage{enumitem}
\usepackage{subcaption}
\usepackage{amsmath}
\usepackage{cleveref}
\usepackage{bbm}
\usepackage{amsthm}
\usepackage{wrapfig}
\usepackage[normalem]{ulem}
\usepackage{tikz}
\usepackage{graphicx}
\usepackage{pifont}
\usetikzlibrary{arrows.meta,positioning,calc,fit,shapes.geometric}

\usepackage{etoc}
\renewcommand{\contentsname}{Table of Contents}

\theoremstyle{plain}
\newtheorem{theorem}{Theorem}[section]
\newtheorem{lemma}[theorem]{Lemma}

\newtheorem{prop}[theorem]{Proposition}

\newtheorem*{lemmaformal}{Lemma~\ref{lem:main-coverage}}

\theoremstyle{definition}

\newtheorem{example}[theorem]{Example}

\theoremstyle{remark}

\definecolor{kone}{HTML}{FFE34D}
\definecolor{ktwo}{HTML}{69A7F2}
\definecolor{kthree}{HTML}{F06ACB}
\definecolor{kfour}{HTML}{20C2A7}
\definecolor{kfive}{HTML}{FF9B2F}
\definecolor{edgegray}{HTML}{4A4F57}
\definecolor{lightpanel}{HTML}{FAFAFA}
\definecolor{panelborder}{HTML}{B8B8B8}

\title{Compositional Reasoning in Language Models under Reinforcement Learning Post-Training}

\author{Yu He \\
  Stanford University\\
  \texttt{heyu@stanford.edu} \\
  \And
  Yingxi Li \\
  Stanford University\\
  \texttt{yingxi@stanford.edu} \\
  \And
  Yifei Wang \\
  Amazon AGI Labs\\
  \texttt{yifeiwg@amazon.com} \\
  \And
  Ellen Vitercik \\
  Stanford University\\
  \texttt{vitercik@stanford.edu} \\
}

\begin{document}

\maketitle

\begin{abstract}

Compositional reasoning is critical for real-world problem solving: since training data is necessarily limited, models must generalize by composing learned skills in new ways. While post-training methods such as reinforcement learning (RL) have substantially improved the reasoning abilities of language models (LMs), their effects on compositional reasoning remain less well understood. We propose a dependency-graph framework to formalize compositional reasoning, yielding three levels of compositionality with increasing complexity. Empirically, we instantiate this framework with data-structure tasks, which provide deterministic reward computation and clear compositional structure. We find a consistent decomposed-to-composed asymmetry: decomposed-skill training does not reliably transfer to composed tasks, whereas composed-task training transfers more readily back to decomposed tasks. We provide theoretical explanation for this asymmetry, and further evaluate compositional generalization under length extrapolation, structural distribution shift, and transfer to tasks requiring unseen skills. Finally, we present a pilot study on real-world tool-calling benchmarks, showing preliminary evidence that the decomposed-to-composed asymmetry can extend to practical settings.

\end{abstract}

\section{Introduction}

Compositional reasoning is a prerequisite for general intelligence. A model cannot merely learn individual skills; it must reuse and combine them systematically to solve unseen, more complex problems. This is necessary because real-world reasoning is open-ended, while training data is inherently limited. A model cannot be trained on every possible math problem, program, or tool-use trajectory it may encounter; instead, it must generalize by composing familiar skills in new ways.

As language models (LMs) take on increasingly complex reasoning tasks, understanding their compositional reasoning ability has become critical. However, existing studies of compositionality in LMs have largely focused on semantic composition in language understanding~\citep{yu2024skillmix, zhao2024can} or instruction following~\citep{yang-etal-2024-exploring}. In contrast, compositionality in reasoning tasks is less understood. Unlike linguistic composition, where primitives can cleanly correspond to words, phrases, or instructions, reasoning tasks often involve latent skills whose definitions and interactions are non-trivial to formalize. This makes it difficult to systematically evaluate whether models can compose reasoning skills.

On the other hand, reinforcement learning (RL) post-training has recently become a central approach for improving LM reasoning~\citep{deepseekai2025deepseekr1incentivizingreasoningcapability, shao2024deepseekmathpushinglimitsmathematical, lambert2025tulu, yue2026does}. Yet, most work on compositional reasoning studies in-context learning~\citep{ahuja2025provablelengthcompositionalgeneralization, ramesh2024compositional, chang2026characterizing, ye2025how} or supervised finetuning~\citep{zhao2024can, yang-etal-2024-exploring, yin2025learning}. Much less is known about the effects of RL on compositional reasoning. For example, one natural approach is to train on individual skills, where the smaller tasks may make it easier to design RL rewards and verifiers. However, it remains unclear whether LMs can compose the learned skills at test time.

To address this gap, we formalize compositional reasoning using a dependency-graph framework. We define skills as reusable primitives required to solve reasoning tasks, and use dependency graphs to model the compositional structure among them. This framework provides a formal basis for systematically evaluating compositional reasoning in language models.

For empirical evaluation, we instantiate the framework using data-structure tasks from DSR-Bench~\citep{he2026llmsreasonstructurallybenchmarking}, a benchmark for evaluating structural reasoning in LMs. Data structures are well suited for this study because they provide deterministic, verifiable final outputs for RL reward computation. Moreover, these algorithmic tasks can be decomposed into clear skills, and the way these skills compose is explicit and unambiguous. For example, in binary-search-tree construction, insertion is the primitive skill, and construction requires applying this skill sequentially. This controlled setting allows us to compare decomposed-skill training with composed-task training and analyze how RL post-training affects the compositional reasoning ability of LMs.

\textbf{A summary of our contributions:}
\begin{itemize}[leftmargin=*,topsep=2pt, itemsep=2pt, parsep=1pt]
    \item Our conceptual contribution is a framework formalizing compositional reasoning in LMs by representing tasks as dependency graphs over reusable primitive skills (Figure \ref{fig:three-compositionality}). This framework gives rise to three increasingly difficult levels of compositionality with distinct challenges: single-skill chains, which isolate horizon generalization; multi-skill chains, which add skill switching; and branch--merge graphs, which further add non-local dependencies across branches (Section~\ref{sec:compositionality-framework}).

    \item Empirically, we conduct a controlled RL comparison between decomposed-skill training and composed-task training across three compositionality levels, and identify a consistent asymmetry: decomposed-skill training transfers poorly to composed tasks, whereas composed-task training transfers more readily back to decomposed tasks. We further evaluate compositional generalization under three transfer settings: length generalization, distribution shift, and unseen skills (Section~\ref{sec:empirical-compositionality}). 

    \item We offer a theoretical explanation for the decomposed-to-composed asymmetry through error compounding and coverage shift, showing that composed reasoning requires sustained correctness over induced state distributions that decomposed supervision may not cover (Section~\ref{sec:theory}).

    \item Finally, we conduct a pilot study on real-world tool-calling benchmarks, demonstrating preliminary evidence for the decomposed-to-composed asymmetry in practical settings (Section~\ref{sec:pilot}).
\end{itemize}

\begin{figure*}[t]
\centering
\resizebox{\linewidth}{!}{\begin{tikzpicture}[
    font=\sffamily,
    panel/.style={
        draw=gray!55,
        rounded corners=4pt,
        line width=0.7pt,
        fill=white
    },
    rowfill/.style={
        fill=blue!6,
        draw=gray!35,
        line width=0.45pt
    },
    rowbox/.style={
        draw=gray!35,
        line width=0.45pt
    },
    labeltext/.style={
        anchor=west,
        align=left,
        font=\sffamily\bfseries\scriptsize
    },
    bodytext/.style={
        anchor=west,
        align=left,
        font=\sffamily\scriptsize,
        text width=2.95cm
    },
    skillnode/.style={
        circle,
        draw=black,
        line width=0.55pt,
        minimum size=0.48cm,
        font=\sffamily\tiny,
        inner sep=0pt
    },
    arrow/.style={
        ->,
        >=stealth,
        line width=0.85pt,
        draw=black!65,
        shorten <=1.0pt,
        shorten >=1.0pt
    },
    brancharrow/.style={
        ->,
        >=stealth,
        line width=0.85pt,
        draw=black!65,
        rounded corners=5pt,
        shorten <=1.5pt,
        shorten >=1.5pt
    },
    title/.style={
        font=\sffamily\bfseries\footnotesize,
        align=center,
        text width=4.35cm
    },
    smallnote/.style={
        font=\sffamily\itshape\scriptsize,
        text=black!65
    },
    badge/.style={
        draw=gray!40,
        rounded corners=2pt,
        fill=gray!12,
        inner xsep=4pt,
        inner ysep=1.2pt,
        font=\sffamily\bfseries\tiny
    },
    obsbox/.style={
        rounded corners=3pt,
        line width=0.55pt,
        fill=white
    },
    hardbox/.style={
        draw=red!60!black,
        fill=red!6
    },
    easybox/.style={
        draw=green!45!black,
        fill=green!6
    },
    hardarrow/.style={
        ->,
        >=stealth,
        line width=0.95pt,
        draw=red!70!black
    },
    easyarrow/.style={
        ->,
        >=stealth,
        line width=0.95pt,
        draw=green!50!black
    },
    hierarrow/.style={
        ->,
        >=stealth,
        line width=0.9pt,
        draw=black!65
    }
]

\path[use as bounding box] (0,-0.15) rectangle (20.05,5.95);

% ------------------------------------------------------------
% Colors
% ------------------------------------------------------------
\definecolor{kone}{HTML}{FFD83D}
\definecolor{ktwo}{HTML}{5AA7F2}
\definecolor{kthree}{HTML}{F06AC8}
\definecolor{kfour}{HTML}{00BFA6}
\definecolor{kfive}{HTML}{FF951E}

\draw[panel] (0,0.8) rectangle (1.25,5.95);

\node[
    font=\sffamily\bfseries\small,
    align=center,
    anchor=north west,
    text width=1.00cm
] at (0.05,5.78) {Skills};

\draw[gray!55] (0.25,4.68) -- (1.00,4.68);

\node[skillnode, fill=kone]   at (0.625,4.20) {k1};
% \node[anchor=west, font=\sffamily\scriptsize] at (1.03,4.20) {k1};

\node[skillnode, fill=ktwo]   at (0.625,3.45) {k2};
% \node[anchor=west, font=\sffamily\scriptsize] at (1.03,3.45) {k2};

\node[skillnode, fill=kthree] at (0.625,2.70) {k3};
% \node[anchor=west, font=\sffamily\scriptsize] at (1.03,2.70) {k3};

\node[skillnode, fill=kfour]  at (0.625,1.95) {k4};
% \node[anchor=west, font=\sffamily\scriptsize] at (1.03,1.95) {k4};

\node[skillnode, fill=kfive]  at (0.625,1.20) {k5};
% \node[anchor=west, font=\sffamily\scriptsize] at (1.03,1.20) {k5};

% ------------------------------------------------------------
% Main panel boxes
% ------------------------------------------------------------
\draw[panel] (1.53,0.8) rectangle (6.28,5.95);
\draw[panel] (6.56,0.8) rectangle (11.31,5.95);
\draw[panel] (11.59,0.8) rectangle (16.34,5.95);
% Rightmost panel spans full height and uses the same fill as "Structure"
\draw[panel, fill=blue!6] (16.62,-0.15) rectangle (20.2,5.95);

% ============================================================
% Panel (a): Single-skill linear composition
% ============================================================
% \node[badge] at (3.905,5.73) {Level 1};
\node[title] at (3.905,5.4) {Level I: Single-Skill Chain\\{\mdseries (linear composition)}};

\node[skillnode, fill=kone] (a1) at (3.905,4.60) {k1};
\node[skillnode, fill=kone] (a2) at (3.905,3.72) {k1};
\node[skillnode, fill=kone] (a3) at (3.905,2.84) {k1};
\node[skillnode, fill=kone] (a4) at (3.905,1.96) {k1};

\draw[arrow] (a1) -- (a2);
\draw[arrow] (a2) -- (a3);
\draw[arrow] (a3) -- (a4);

\draw[rowfill] (1.53,1.58) rectangle (3.22,0.84);
\draw[rowbox]  (3.22,1.58) rectangle (6.28,0.84);
\node[labeltext] at (1.65,1.21) {Structure:};
\node[bodytext]  at (3.34,1.21) {One skill repeated\\along one path.};

% \draw[rowfill] (1.53,0.84) rectangle (3.22,0.08);
% \draw[rowbox]  (3.22,0.84) rectangle (6.28,0.08);
% \node[labeltext] at (1.65,0.46) {Added\\difficulty:};
% \node[bodytext]  at (3.34,0.46) {Depth only:\\repeated error accumulation.};

% ============================================================
% Panel (b): Multi-skill linear composition
% ============================================================
% \node[badge] at (8.935,5.73) {Level 2};
\node[title] at (8.935,5.4) {Level II: Multi-Skill Chain\\{\mdseries (linear composition)}};

\node[skillnode, fill=kone]  (b1) at (8.935,4.60) {k1};
\node[skillnode, fill=ktwo]  (b2) at (8.935,3.72) {k2};
\node[skillnode, fill=kone]  (b3) at (8.935,2.84) {k1};
\node[skillnode, fill=ktwo]  (b4) at (8.935,1.96) {k2};

\draw[arrow] (b1) -- (b2);
\draw[arrow] (b2) -- (b3);
\draw[arrow] (b3) -- (b4);

\draw[rowfill] (6.56,1.58) rectangle (8.25,0.84);
\draw[rowbox]  (8.25,1.58) rectangle (11.31,0.84);
\node[labeltext] at (6.68,1.21) {Structure:};
\node[bodytext]  at (8.37,1.21) {Different skills mixed\\along one path.};

% \draw[rowfill] (6.56,0.84) rectangle (8.25,0.08);
% \draw[rowbox]  (8.25,0.84) rectangle (11.31,0.08);
% \node[labeltext] at (6.68,0.46) {Added\\difficulty:};
% \node[bodytext]  at (8.37,0.46) {Type I + skill switching\\and interface mismatch.};

% ============================================================
% Panel (c): Branch-and-merge composition
% ============================================================
% \node[badge] at (13.965,5.73) {Level 3};
\node[title] at (13.965,5.4) {Level III: Branch-Merge Graph\\{\mdseries (non-linear composition)}};

\node[skillnode, fill=kthree] (c0) at (13.965,4.72) {k3};
\node[skillnode, fill=kone]   (c1) at (12.790,3.84) {k1};
\node[skillnode, fill=ktwo]   (c2) at (13.965,3.84) {k2};
\node[skillnode, fill=ktwo]   (c3) at (15.140,3.84) {k2};
\node[skillnode, fill=kfour]  (c4) at (13.965,2.94) {k4};
\node[skillnode, fill=kfive]  (c5) at (13.965,2.06) {k5};

\draw[brancharrow] (c0.south west) -- (13.18,4.33) -- (c1.north);
\draw[arrow]       (c0) -- (c2);
\draw[brancharrow] (c0.south east) -- (14.85,4.33) -- (c3.north);

\draw[brancharrow] (c1.south) -- (12.79,3.33) -- (c4.west);
\draw[arrow]       (c2) -- (c4);
\draw[brancharrow] (c3.south) -- (15.14,3.33) -- (c4.east);

\draw[arrow] (c4) -- (c5);

\node[smallnote] at (12.35,4.27) {branch};
\node[smallnote] at (12.35,3.18) {merge};

\draw[rowfill] (11.59,1.58) rectangle (13.28,0.84);
\draw[rowbox]  (13.28,1.58) rectangle (16.34,0.84);
\node[labeltext] at (11.71,1.21) {Structure:};
\node[bodytext]  at (13.40,1.21) {Multiple-branched\\dependency graph.};

% \draw[rowfill] (11.59,0.84) rectangle (13.28,0.08);
% \draw[rowbox]  (13.28,0.84) rectangle (16.34,0.08);
% \node[labeltext] at (11.71,0.46) {Added\\difficulty:};
% \node[bodytext]  at (13.40,0.46) {Type II + non-linear\\dependency coordination.};

% ------------------------------------------------------------
% Bottom strip: hierarchical layering
% ------------------------------------------------------------
\draw[
    fill=yellow!8,
    draw=orange!65,
    rounded corners=3pt,
    line width=0.6pt
] (0,-0.15) rectangle (16.34,0.50);

\node[
    font=\sffamily\bfseries\scriptsize,
    align=center
] at (0.95,0.175) {Increasing\\complexity};

\node[
    font=\sffamily\scriptsize,
    align=center
] at (3.88,0.175) {Challenge 1: \\Horizon generalization};

\node[
    font=\sffamily\scriptsize,
    align=center
] at (8.9,0.175) {Challenge 2:\\Skill switching};

\node[
    font=\sffamily\scriptsize,
    align=center
] at (13.9,0.175) {Challenge 3:\\Non-local dependency};

\draw[hierarrow] (5.9,0.175) -- (6.9,0.175);
\node[
    font=\sffamily\small,
    align=center,
    fill=yellow!8,
    inner sep=1pt
] at (6.4,0.33) {+};

\draw[hierarrow] (10.9,0.175) -- (11.9,0.175);
\node[
    font=\sffamily\small,
    align=center,
    fill=yellow!8,
    inner sep=1pt
] at (11.4,0.33) {+};

% ============================================================
% Panel (d): Decomposed-to-Composed Asymmetry
% ============================================================
\node[title] at (18.5,5.2) {Decomposed-to-\\Composed Asymmetry\\{}};

% --- hard direction (top) ---
\draw[obsbox, hardbox] (16.8,4.00) rectangle (20.0,4.45);
\node[
    font=\sffamily\scriptsize,
    align=center
] at (18.4,4.225) {Train on decomposed skills};

\draw[obsbox, hardbox] (16.8,2.95) rectangle (20.0,3.40);
\node[
    font=\sffamily\scriptsize,
    align=center
] at (18.4,3.175) {Test on composed tasks};

\draw[hardarrow] (18.4,3.98) -- (18.4,3.47);
\node[
    font=\sffamily\bfseries\tiny,
    text=red!75!black,
    fill=none,
    inner sep=1pt
] at (18.8,3.72) {hard};

% divider
\draw[gray!40] (16.8,2.425) -- (20.0,2.425);

% --- easier direction (bottom) ---
\draw[obsbox, easybox] (16.8,1.45) rectangle (20.,1.90);
\node[
    font=\sffamily\scriptsize,
    align=center
] at (18.4,1.675) {Train on composed tasks};

\draw[obsbox, easybox] (16.8,0.40) rectangle (20.,0.85);
\node[
    font=\sffamily\scriptsize,
    align=center
] at (18.4,0.625) {Test on decomposed skills};

\draw[easyarrow] (18.4,1.43) -- (18.4,0.92);
\node[
    font=\sffamily\bfseries\tiny,
    text=green!45!black,
    fill=none,
    inner sep=1pt
] at (18.8,1.17) {easier};
\end{tikzpicture}}
\caption{Three levels of compositionality with increasing complexity, shown as dependency structures among skills. Skills are defined as reusable primitives required to solve a reasoning task.} 
\label{fig:three-compositionality}
\end{figure*}

\section{Related work}

\paragraph{Compositional generalization and skill composition.}
Compositional generalization has been studied extensively in sequence-to-sequence learning, including SCAN~\citep{pmlr-v80-lake18a}, CFQ~\citep{keysers2020measuring}, and COGS~\citep{kim-linzen-2020-cogs}, which evaluate whether models can recombine familiar primitives in novel semantic structures. Recent work extends this question to language models: SKILL-MIX~\citep{yu2024skillmix} evaluates whether models can generate text combining multiple specified skills; \citet{zhao2024can} study whether smaller models can learn higher-order skill combinations from lower-order examples; \citet{yang-etal-2024-exploring} study compositional generalization in instruction following; and \citet{yin2025learning} study composable chain-of-thought supervision for transferring from atomic to compositional tasks. These works study important forms of linguistic, instructional, or trace-level composition. In contrast, we focus on reasoning tasks, where the primitives and their interactions are often less explicit than surface-level linguistics.

\paragraph{RL post-training for compositional reasoning.}
RL post-training with verifiable rewards has recently become a central approach for improving language-model reasoning~\citep{deepseekai2025deepseekr1incentivizingreasoningcapability, shao2024deepseekmathpushinglimitsmathematical, lambert2025tulu, yue2026does}, but its role in compositional transfer remains less understood. Closest to our work, \citet{yuan2025f} show that RL can teach nested composition of string-transformation functions, while \citet{xu2026composition} propose Composition-RL, which constructs harder verifiable prompts by composing existing prompts for RL training. \citet{li2025unveiling} study compositional generalization in vision-language reasoning under SFT and RL. In contrast, we focus on text-based reasoning and offer a controlled study of compositional structure from an algorithmic-reasoning perspective.

\paragraph{Structural and algorithmic reasoning.}
\citet{dziri2023faith} use computation graphs to analyze fixed algorithm executions and show that transformer performance can degrade with compositional complexity. DSR-Bench~\citep{he2026llmsreasonstructurallybenchmarking} evaluates structural reasoning through deterministic data-structure tasks, and recent graph-reasoning work uses RL to improve LMs on graph-theoretic tasks~\citep{guo2026g1}. We build on this line of work by using data structures as a controlled environment for RL post-training, while introducing a dependency-graph framework that characterizes how reusable skills compose and isolates distinct bottlenecks: horizon generalization, skill switching, and non-local dependencies.

\section{A dependency-graph framework for compositionality} \label{sec:compositionality-framework}

We present our conceptual contribution by formalizing compositional reasoning with a \emph{dependency graph} over reusable skills (Figure~\ref{fig:three-compositionality}), capturing compositional structures in reasoning tasks. Our framework yields three hierarchical levels of compositionality that isolate distinct bottlenecks: horizon generalization, skill switching, and non-local dependencies. This structure also enables our analysis of decomposed-skill versus composed-task training. More broadly, it provides a formal basis for future empirical and theoretical studies of compositional reasoning in language models.

\paragraph{Dependency graph.} For a task instance $\tau \in \mathcal{T}$, let the dependency graph $G_\tau=(V_\tau,E_\tau)$ be a finite directed acyclic graph (DAG), where each node represents a skill invocation and each edge indicates that the output of one invocation is required as input to another. Intuitively, $G_\tau$ specifies \emph{which} intermediate computations must be performed, \emph{which} skill is invoked at each step, and \emph{how} they depend on one another; therefore, it represents the compositional structure within a task. We use a Binary Search Tree (BST) construction task over input $\{1, 2, 3\}$ as a running example.

\begin{itemize}[leftmargin=*,topsep=2pt, itemsep=2pt, parsep=1pt]
    \item \textbf{Skill.} We assume a set of reusable skill primitives $\mathcal{K}$. Each skill type $k \in \mathcal{K}$ is associated with a function $f_k : \mathcal{X}_k \rightarrow \mathcal{Y}_k$, where $\mathcal{X}_k$ and $\mathcal{Y}_k$ denote the input and output spaces of that skill. For example, BST construction is composed of a sequence of \texttt{insert} operations, where \texttt{insert} acts as the reusable skill primitive.

    \item \textbf{Node.} A node $v \in V_\tau$ is labeled by a skill type $\kappa(v) \in \mathcal{K}$, indicating which skill is invoked at that node. Let $z_v$ denote the intermediate output produced by node $v$. For example, a concrete operation of \texttt{(insert, 1)} to the current BST is a skill invocation.

    \item \textbf{Edge.} An edge $(u,v) \in E_\tau$ indicates a dependency constraint that the intermediate output $z_u$ is required to form the input to node $v$. Let $\mathrm{pred}(v)=\{u \in V_\tau : (u,v)\in E_\tau\}$ denote the predecessors of $v$. The input $x_v$ and output $z_v$ to node $v$ are computed via
\[
    x_v = \psi_v\!\left(\ell_v, \left(z_u\right)_{u \in \mathrm{pred}(v)}\right)
    \in \mathcal{X}_{\kappa(v)},
    \qquad
    z_v = f_{\kappa(v)}(x_v).
\] 
Here, $\ell_v$ denotes local information specified by the task instance (e.g., what value to insert to the tree) and $\psi_v$ is an input-construction map. For example, after \texttt{(insert, 1)}, the next operation \texttt{(insert, 2)} takes as input the BST state produced by the previous operation.

    \item \textbf{Output.} For simplicity, we assume that $G_\tau$ has a unique sink node $v_{\mathrm{sink}} \in V_\tau$. The final output of the task is the intermediate output $z_{v_{\mathrm{sink}}}$ produced at this sink node. For example, the final output of BST construction is given by the output of the skill invocation \texttt{(insert, 3)}.
\end{itemize}

\paragraph{Three levels of compositionality.} As illustrated in Figure~\ref{fig:three-compositionality}, we formalize three levels of compositionality under the dependency-graph framework.
\begin{itemize}[leftmargin=*,topsep=2pt, itemsep=2pt, parsep=1pt]

    \item \textbf{Level I: Single-skill chain (linear composition).}
    Single-skill chain is the simplest setting, where the same skill is repeated across a sequence of steps. Formally, $G_\tau$ is a path, all nodes share the same skill type, and $|\{\kappa(v) : v \in  V_{\tau}\}|=1$. This setting tests whether a model trained on smaller instances can extrapolate to larger ones that require more repetitions of the same reasoning pattern.

    \item \textbf{Level II: Multi-skill chain (linear composition).}
    Multi-skill chain also has a path-structured dependency graph, but the skill label $\kappa(v)$ may vary across nodes with $|\{\kappa(v) : v \in  V_{\tau}\}|>1$.  This setting tests whether a model can maintain intermediate state while sequentially composing heterogeneous skills, including handling skill switching and interface mismatches across skills.

    \item \textbf{Level III: Branch-merge graph (non-linear composition).} Here, $G_\tau$ is a general directed acyclic graph rather than a path. \emph{Branching} occurs when one intermediate output feeds multiple downstream nodes, while \emph{merging} occurs when a node depends on multiple upstream outputs. 

\end{itemize}

\begin{figure*}[t]
\vspace{-0.8em}
\centering
\resizebox{0.9\linewidth}{!}{%
\begin{tikzpicture}[
    font=\sffamily,
    >=Latex,
    node distance=1.1cm,
    skill/.style={
        circle,
        draw=black,
        line width=0.8pt,
        minimum size=8.0mm,
        inner sep=0pt,
        font=\sffamily\small
    },
    smallskill/.style={
        circle,
        draw=black,
        line width=0.55pt,
        minimum size=6.2mm,
        inner sep=0pt,
        font=\sffamily\scriptsize
    },
    arr/.style={
        -{Latex[length=2mm,width=1.5mm]},
        line width=0.8pt,
        draw=edgegray
    },
    smallarr/.style={
        -{Latex[length=1.5mm,width=1.1mm]},
        line width=0.6pt,
        draw=edgegray
    },
    note/.style={
        align=left,
        text=edgegray,
        font=\sffamily\itshape\scriptsize,
        text width=2.5cm
    }
]

% ============================================================
% (a) Linear composition panel
% ============================================================
\node[
    rounded corners=8pt,
    draw=panelborder,
    line width=0.8pt,
    fill=lightpanel,
    minimum width=8.0cm,
    minimum height=9.0cm,
    anchor=north west
] (panelA) at (0,0) {};

\node[
    font=\sffamily\bfseries\large,
    anchor=north
] at ($(panelA.north)+(0,-0.35)$)
{(a) Linear composition};

\node[
    font=\sffamily\large,
    text=edgegray,
    anchor=north
] at ($(panelA.north)+(0,-1.05)$)
{Dependency order = execution order};

% Linear chain
\node[skill, fill=kone]  (a-k1) at (4.0,-2.35) {k1};
\node[skill, fill=ktwo]  (a-k2) at (4.0,-3.85) {k2};
\node[skill, fill=kfour] (a-k4) at (4.0,-5.35) {k4};
\node[skill, fill=kfive] (a-k5) at (4.0,-6.85) {k5};

\draw[arr] (a-k1) -- (a-k2);
\draw[arr] (a-k2) -- (a-k4);
\draw[arr] (a-k4) -- (a-k5);

% Divider
\draw[draw=panelborder, line width=0.5pt] (0.6,-7.65) -- (7.4,-7.65);

% Unique traversal row
\node[
    anchor=west,
    font=\sffamily\bfseries\small,
    align=left
] at (0.4,-8.35)
{Unique valid\\reasoning trace};

\node[smallskill, fill=kone]  (a-r1) at (3.5,-8.35) {k1};
\node[smallskill, fill=ktwo,  right=0.48cm of a-r1] (a-r2) {k2};
\node[smallskill, fill=kfour, right=0.48cm of a-r2] (a-r3) {k4};
\node[smallskill, fill=kfive, right=0.48cm of a-r3] (a-r4) {k5};

\draw[smallarr] (a-r1) -- (a-r2);
\draw[smallarr] (a-r2) -- (a-r3);
\draw[smallarr] (a-r3) -- (a-r4);

% Linear explanation
% \node[
%     anchor=north west,
%     align=left,
%     text width=6.7cm,
%     font=\sffamily\small
% ] at (0.4,-9.05)
% {The dependency graph is a chain, so it fixes a unique correct reasoning trace.};

% ============================================================
% (b) Branch-and-merge composition panel
% ============================================================
\node[
    rounded corners=8pt,
    draw=panelborder,
    line width=0.8pt,
    fill=lightpanel,
    minimum width=15.4cm,
    minimum height=9.0cm,
    anchor=north west
] (panelB) at (8.45,0) {};

\node[
    font=\sffamily\bfseries\large,
    anchor=north
] at ($(panelB.north)+(0,-0.35)$)
{(b) Non-linear composition};

\node[
    font=\sffamily\large,
    text=edgegray,
    anchor=north
] at ($(panelB.north)+(0,-1.05)$)
{A valid reasoning trace corresponds to any topological ordering.};

% Branch-and-merge DAG
\node[skill, fill=kthree] (k3)  at (12.35,-2.45) {k3};
\node[skill, fill=kone]   (k1)  at (10.00,-4.35) {k1};
\node[skill, fill=ktwo]   (k2a) at (12.35,-4.35) {k2a};
\node[skill, fill=ktwo]   (k2b) at (14.70,-4.35) {k2b};
\node[skill, fill=kfour]  (k4)  at (12.60,-6.55) {k4};
\node[skill, fill=kfive]  (k5)  at (12.60,-8.25) {k5};

\draw[arr] (k3) -- (k1);
\draw[arr] (k3) -- (k2a);
\draw[arr] (k3) -- (k2b);

\draw[arr] (k2a) -- (k4);
\draw[arr] (k2b) -- (k4);

\draw[arr] (k1.south) .. controls +(0,-1.15) and +(-1.1,0.25) .. (k4.west);

\draw[arr] (k4) -- (k5);

% Annotation near branch point
\node[note, anchor=west] (readynote) at (14.00,-2.25)
{After k3, any of k1, \\k2a, k2b can be next.};

\draw[
    -{Latex[length=1.5mm,width=1.2mm]},
    dotted,
    line width=0.75pt,
    draw=edgegray
] (readynote.west) .. controls +(-0.7,0.0) and +(0.8,0.2) .. (k3.east);

% Right topological traversal examples
\node[
    font=\sffamily\bfseries\small,
    anchor=west
] at (16.10,-3.15)
{Multiple \emph{serialized} reasoning traces:};

% Row 1
\node[smallskill, fill=kthree] (r1a) at (16.40,-4.05) {k3};
\node[smallskill, fill=kone,   right=0.48cm of r1a] (r1b) {k1};
\node[smallskill, fill=ktwo,   right=0.48cm of r1b] (r1c) {k2a};
\node[smallskill, fill=ktwo,   right=0.48cm of r1c] (r1d) {k2b};
\node[smallskill, fill=kfour,  right=0.48cm of r1d] (r1e) {k4};
\node[smallskill, fill=kfive,  right=0.48cm of r1e] (r1f) {k5};
\node[font=\sffamily\bfseries\Large, text=green!60!black, right=0.25cm of r1f] {$\checkmark$};

\draw[smallarr] (r1a) -- (r1b);
\draw[smallarr] (r1b) -- (r1c);
\draw[smallarr] (r1c) -- (r1d);
\draw[smallarr] (r1d) -- (r1e);
\draw[smallarr] (r1e) -- (r1f);

% Row 2
\node[smallskill, fill=kthree] (r2a) at (16.40,-5.15) {k3};
\node[smallskill, fill=ktwo,   right=0.48cm of r2a] (r2b) {k2a};
\node[smallskill, fill=kone,   right=0.48cm of r2b] (r2c) {k1};
\node[smallskill, fill=ktwo,   right=0.48cm of r2c] (r2d) {k2b};
\node[smallskill, fill=kfour,  right=0.48cm of r2d] (r2e) {k4};
\node[smallskill, fill=kfive,  right=0.48cm of r2e] (r2f) {k5};
\node[font=\sffamily\bfseries\Large, text=green!60!black, right=0.25cm of r2f] {$\checkmark$};

\draw[smallarr] (r2a) -- (r2b);
\draw[smallarr] (r2b) -- (r2c);
\draw[smallarr] (r2c) -- (r2d);
\draw[smallarr] (r2d) -- (r2e);
\draw[smallarr] (r2e) -- (r2f);

% Row 3
\node[smallskill, fill=kthree] (r3a) at (16.40,-6.25) {k3};
\node[smallskill, fill=ktwo,   right=0.48cm of r3a] (r3b) {k2b};
\node[smallskill, fill=ktwo,   right=0.48cm of r3b] (r3c) {k2a};
\node[smallskill, fill=kfour,  right=0.48cm of r3c] (r3d) {k4};
\node[smallskill, fill=kone,   right=0.48cm of r3d] (r3e) {k1};
\node[smallskill, fill=kfive,  right=0.48cm of r3e] (r3f) {k5};
\node[
    font=\sffamily\bfseries\Large,
    text=red!70!black,
    right=0.25cm of r3f
] {\ding{55}};

\draw[smallarr] (r3a) -- (r3b);
\draw[smallarr] (r3b) -- (r3c);
\draw[smallarr] (r3c) -- (r3d);
\draw[smallarr] (r3d) -- (r3e);
\draw[smallarr] (r3e) -- (r3f);

% Branch-and-merge explanation
\node[
    anchor=north west,
    align=left,
    text width=9cm,
    font=\sffamily\small
] at (14.00,-7.3)
{\textbf{Non-local dependency:} The execution of k4 requires information from all previous three states (k1, k2a, k2b) in the reasoning trace, rather than only the immediately preceding state.};

\end{tikzpicture}%
}
\caption{\textbf{Reasoning traces serialize dependency graphs.}
An explicit step-by-step solution can be modeled as a topological ordering of the task dependency graph.}
\label{fig:linear-nonlinear}
\vspace{-1.0em}
\end{figure*}

\paragraph{Linear vs. non-linear compositions: non-local dependency.} We model an explicit step-by-step solution as a sequential \emph{reasoning trace} that serializes the dependency graph $G_\tau$. Under this abstraction, any valid trace must respect the dependencies in $G_\tau$, and thus corresponds to a topological ordering. As shown in Figure~\ref{fig:linear-nonlinear}, linear compositions are chains, so the trace follows a unique order. In contrast, branch-merge graphs are non-linear: branches may be executed in different valid orders before being merged. At merge points, the model must retrieve and combine intermediate outputs from earlier branches, rather than only propagating the immediately preceding state, which therefore requires coordinating \emph{non-local} dependencies. 

\section{Controlled RL study of compositional reasoning}
\label{sec:empirical-compositionality}

We present our empirical contribution by systematically evaluating compositional reasoning under RL fine-tuning using the three-level framework above. We use data-structure tasks from DSR-Bench~\citep{he2026llmsreasonstructurallybenchmarking}, which are well-suited because they provide deterministic, exact verification for RL reward computation, along with clear skill decompositions and compositional structure. We first present our main data-design finding, the decomposed-to-composed asymmetry. We then analyze the sources of compositional difficulty across the three levels and study generalization beyond length extrapolation.

\begin{figure}[t]
\centering
\includegraphics[width=\textwidth]{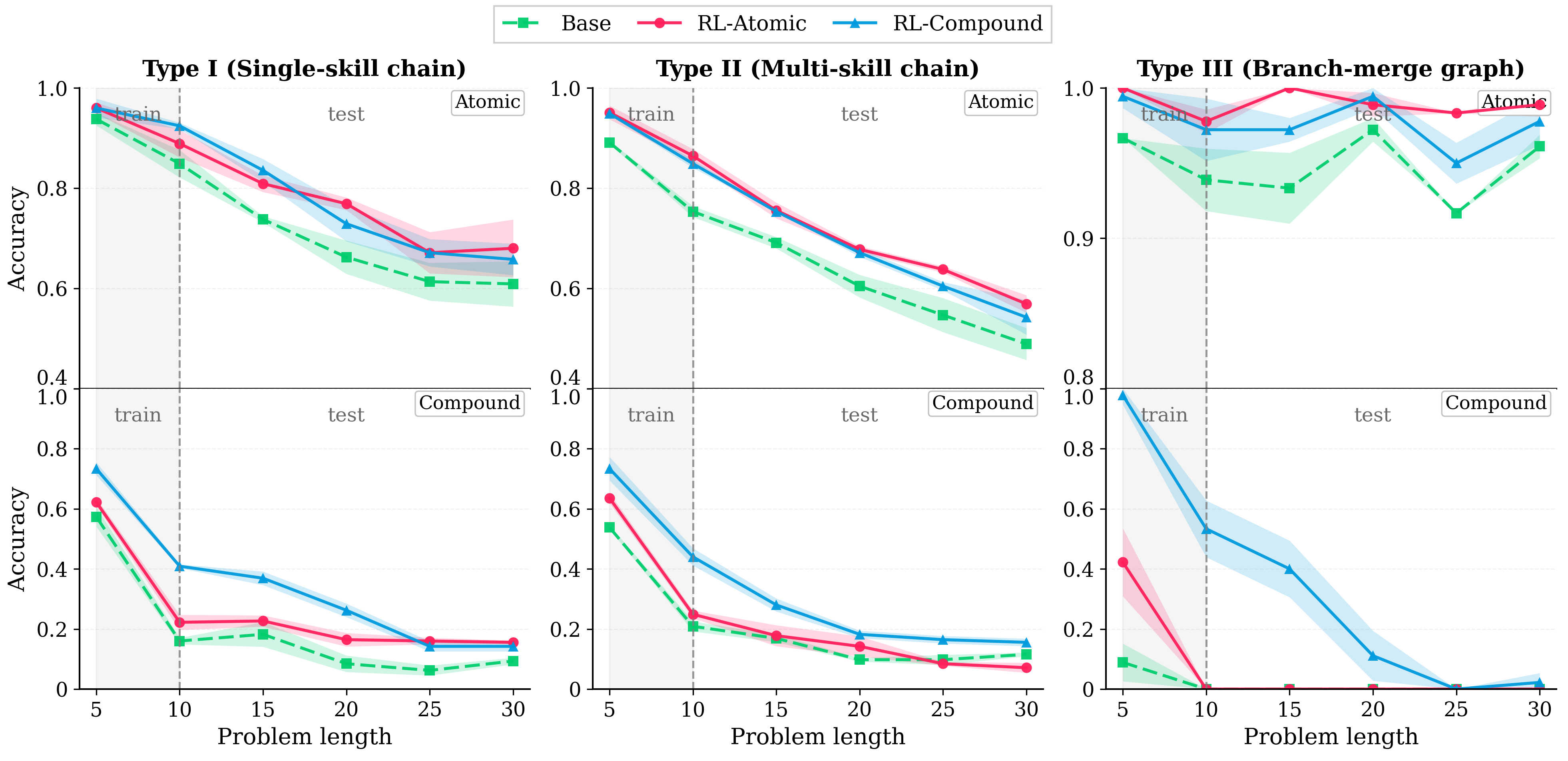}
\caption{Evaluation on three compositionality types with Qwen3-4B-Instruct, averaged across three runs with the standard deviation shown as a shaded region. Rows evaluate \emph{atomic} (decomposed skill) and \emph{compound} (composed task) tasks. RL-Atomic trains on atomic tasks, while RL-Compound trains on compound tasks. Models are trained on small problem lengths (5-10) and evaluated on larger ones (up to 30). Results show a consistent decomposed-to-composed asymmetry.}
\label{fig:decompose_compose}
\end{figure}

\subsection{Decomposed-skill vs. composed-task training}
\label{sec:decompose-vs-compose}

Our main experimental contrast compares RL post-training on decomposed skills and on composed tasks. We use five data-structure domains of varying difficulty: Array, Binary Search Tree (BST), Bloom Filter, Hashmap, and Heap. Task length denotes the initial input size for atomic tasks and the number of operations for compound tasks. Details on task description, example prompt, and data preparation can be found in Appendix~\ref{app:experiment-details}.

\paragraph{Atomic vs. compound tasks.}
An \emph{atomic} task corresponds to a decomposed skill: a single skill-invocation applied to a given state. For example, in an atomic array task of length 3, given $[1,2,3]$, the model may be asked to insert $4$ at position $2$, yielding $[1,2,4,3]$. A \emph{compound} task corresponds to a composed task: a sequence of skill invocations, where intermediate states must be carried forward. For example, in a compound array task of length 3, given $[1,2,3]$, the model may execute ``insert $4$ at position $2$, delete position $0$, then insert $5$ at position $1$,'' and return the final array.

\paragraph{Decomposed-skill vs. composed-task training.}
We use Qwen3-4B-Instruct~\citep{qwen3technicalreport} as the main \textbf{Base} model, and additionally evaluate two more models from different families and sizes: OLMo-3-7B-Instruct~\citep{olmo2025olmo3} and Llama-3.1-8B-Instruct~\citep{grattafiori2024llama3herdmodels} on multi-skill chains. We use instruct models for their instruction-following ability, while excluding models already heavily post-trained for reasoning. For each base model, we compare \textbf{RL-Atomic}, trained only on atomic tasks, with \textbf{RL-Compound}, trained only on compound tasks. Both use Group Relative Policy Optimization~\citep{deepseekai2025deepseekr1incentivizingreasoningcapability} with binary final rewards, while keeping the number of training steps the same. Each problem in DSR-Bench has a uniquely verifiable final answer \citep{he2026llmsreasonstructurallybenchmarking}. Refer to Appendix \ref{app:experiment-details} for details on reward computation, training setup, and hyperparameters.

\paragraph{Main observation: Decomposed-to-composed asymmetry.}
As shown in Figures~\ref{fig:decompose_compose} and~\ref{fig:olmo-llama-atomic-compound}, we observe a consistent asymmetry across three levels of compositionality and model families. Atomic training often improves atomic-task performance, but does not reliably transfer to compound tasks. In contrast, compound training largely preserves or improves atomic-task performance while substantially improving compound-task performance. This suggests that learning decomposed skills in isolation is insufficient for learning how to compose them: composed-task training exposes the model to both the skills and their interactions. We provide theoretical support in Section~\ref{sec:theory}. This shared observation motivates the remaining subsections, where we analyze sources of compositional challenges in a more fine-grained manner using the three-level framework introduced in Section~\ref{sec:compositionality-framework}.

\subsection{Level I (Single-skill chain): Horizon generalization}
\label{sec:single-skill}

Single-skill chained composition tests whether a model can repeatedly apply one skill invocation over a long dependent rollout. As shown in Figure~\ref{fig:three-compositionality}, the dependency graph is a path with the same skill type at every node, yielding a unique valid serialized reasoning trace. This setting isolates horizon generalization, where horizon is the number of sequential skill invocations. Note this is distinct from problem length, which denotes input size. Empirically, we instantiate this setting by stacking insertion operations across the five data-structure domains, treating insertion as a skill.

\paragraph{Compositional challenge 1: horizon generalization.}
In Level I (Single-skill chain) of Figure~\ref{fig:decompose_compose}, both RL-Atomic and RL-Compound consistently improve over the base model, indicating that RL post-training improves the underlying skill. However, the decomposed-to-composed asymmetry persists: RL-Atomic gives only marginal gains on compound tasks, whereas RL-Compound improves both atomic and compound performance. Moreover, performance drops much more sharply on compound than atomic tasks across all models, showing that even repeated invocation of a simple skill poses a horizon-generalization challenge: success requires staying correct across many intermediate states, where one early error can corrupt all later operations.

\subsection{Level II (Multi-skill chain): Level I + Skill switching} \label{sec:multi-skill}

\begin{figure}[h]
\centering
\includegraphics[width=0.95\linewidth]{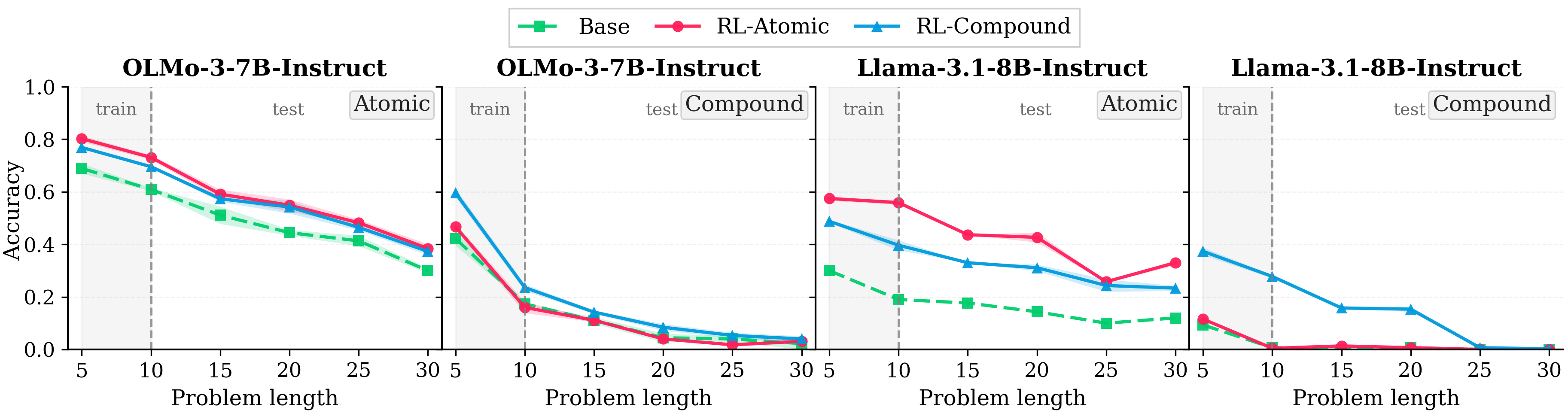}
\caption{Multi-skill chained composition with OLMo-3-7B-Instruct and Llama-3.1-8B-Instruct.}
\label{fig:olmo-llama-atomic-compound}
\end{figure}

Multi-skill chained composition adds a second bottleneck beyond horizon generalization: skill switching. Unlike single-skill chain, where one skill is repeatedly invoked, multi-skill chain requires composing heterogeneous skills in sequence and handling the interfaces between them, where the output produced by one skill must be converted into the appropriate input state for the next. We instantiate this setting by adding deletion as an additional skill across the five data-structure domains. Thus, RL-Atomic trains on single-step insertion or deletion, while RL-Compound trains on multi-step tasks with randomly interleaved insertions and deletions.

\paragraph{Compositional challenge 2: skill switching.}
The decomposed-to-composed asymmetry becomes more pronounced in Level II (Multi-skill chain) of Figure~\ref{fig:decompose_compose}, and also appears for OLMo-3-7B-Instruct and Llama-3.1-8B-Instruct in Figure~\ref{fig:olmo-llama-atomic-compound}. With heterogeneous skill composition, RL-Compound clearly improves compound-task performance, whereas RL-Atomic provides little or no gain over the base models. On atomic tasks, RL-Compound remains comparable to RL-Atomic, indicating that compound training improves on both atomic and compound tasks. Figure~\ref{fig:skill-switch} further shows that RL-Atomic stays close to the base model as the number of skill switches increases, suggesting that atomic training does not teach transitions between heterogeneous skills. Thus, decomposed-skill training becomes insufficient beyond repeatedly stacking one skill: the model must learn to switch skills and propagate intermediate states across changing skill input and output requirements.

\begin{figure}[h]
\centering

\begin{minipage}{0.38\linewidth}
    \centering
    \includegraphics[width=\linewidth]{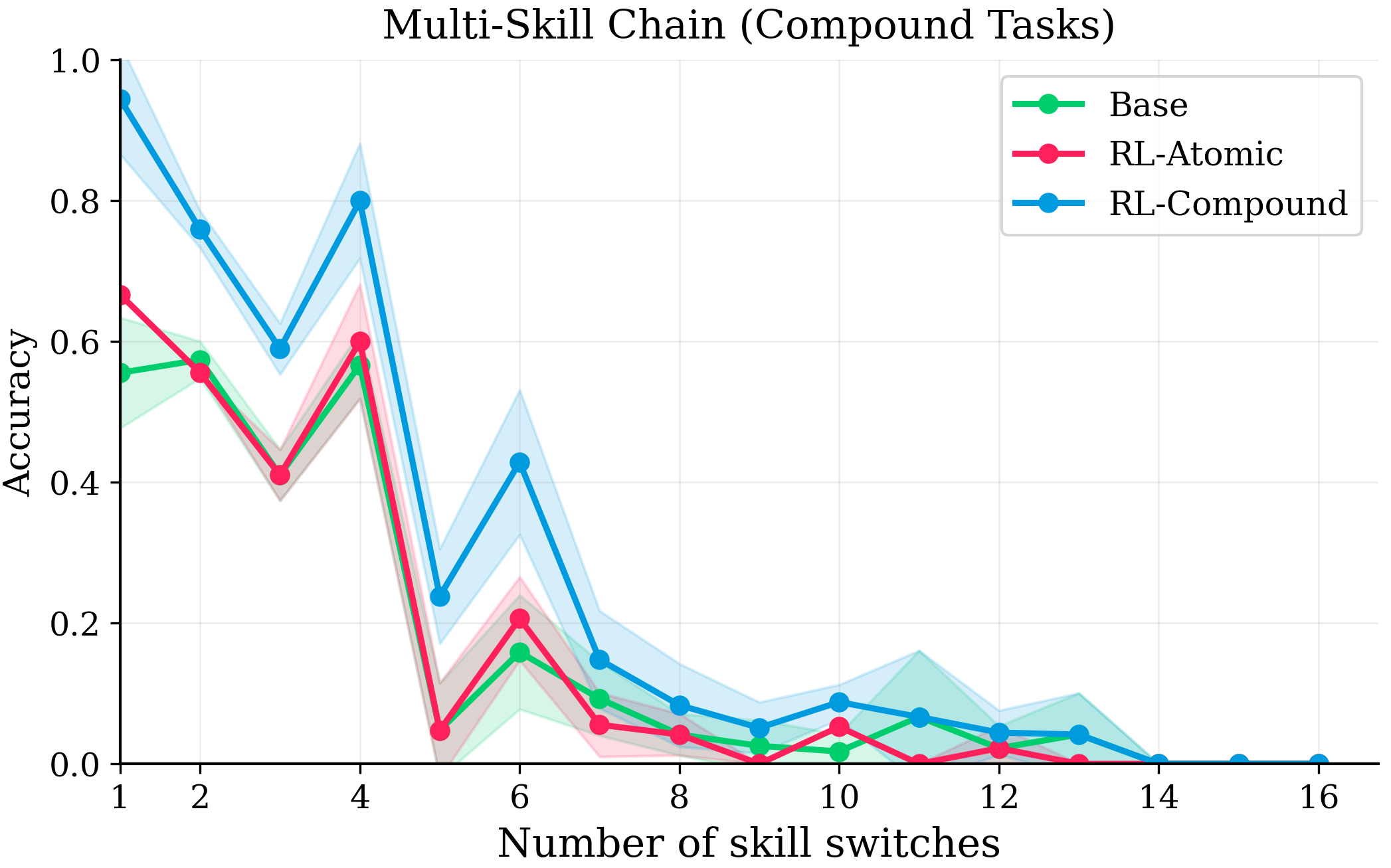}
    \caption{Accuracy as a function of the number of skill switches between insertion and deletion.}
    \label{fig:skill-switch}
\end{minipage}
\hfill
\begin{minipage}{0.6\linewidth}
    \centering
    \begin{minipage}{0.49\linewidth}
        \centering
        \includegraphics[width=\linewidth]{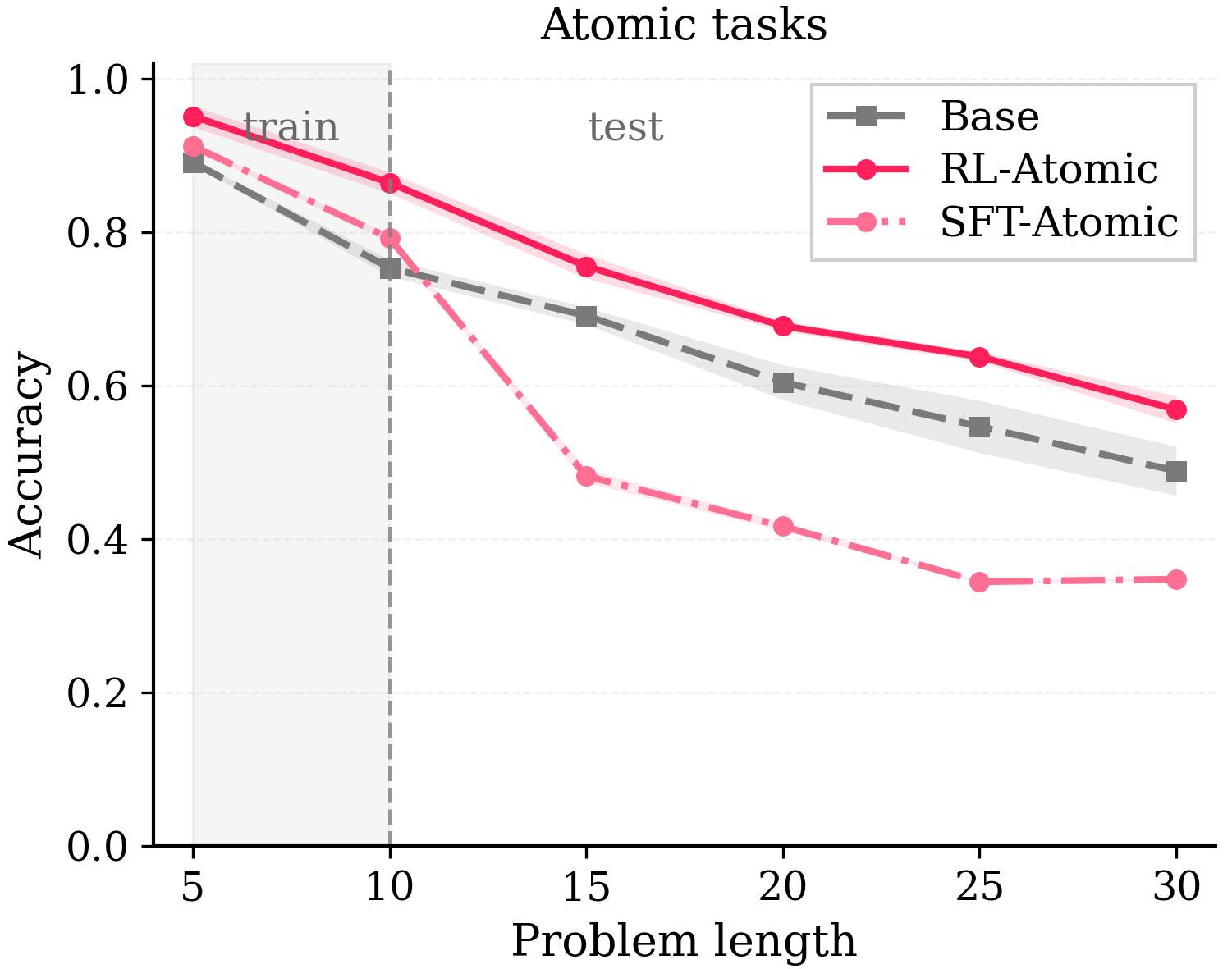}
    \end{minipage}
    \hfill
    \begin{minipage}{0.49\linewidth}
        \centering
        \includegraphics[width=\linewidth]{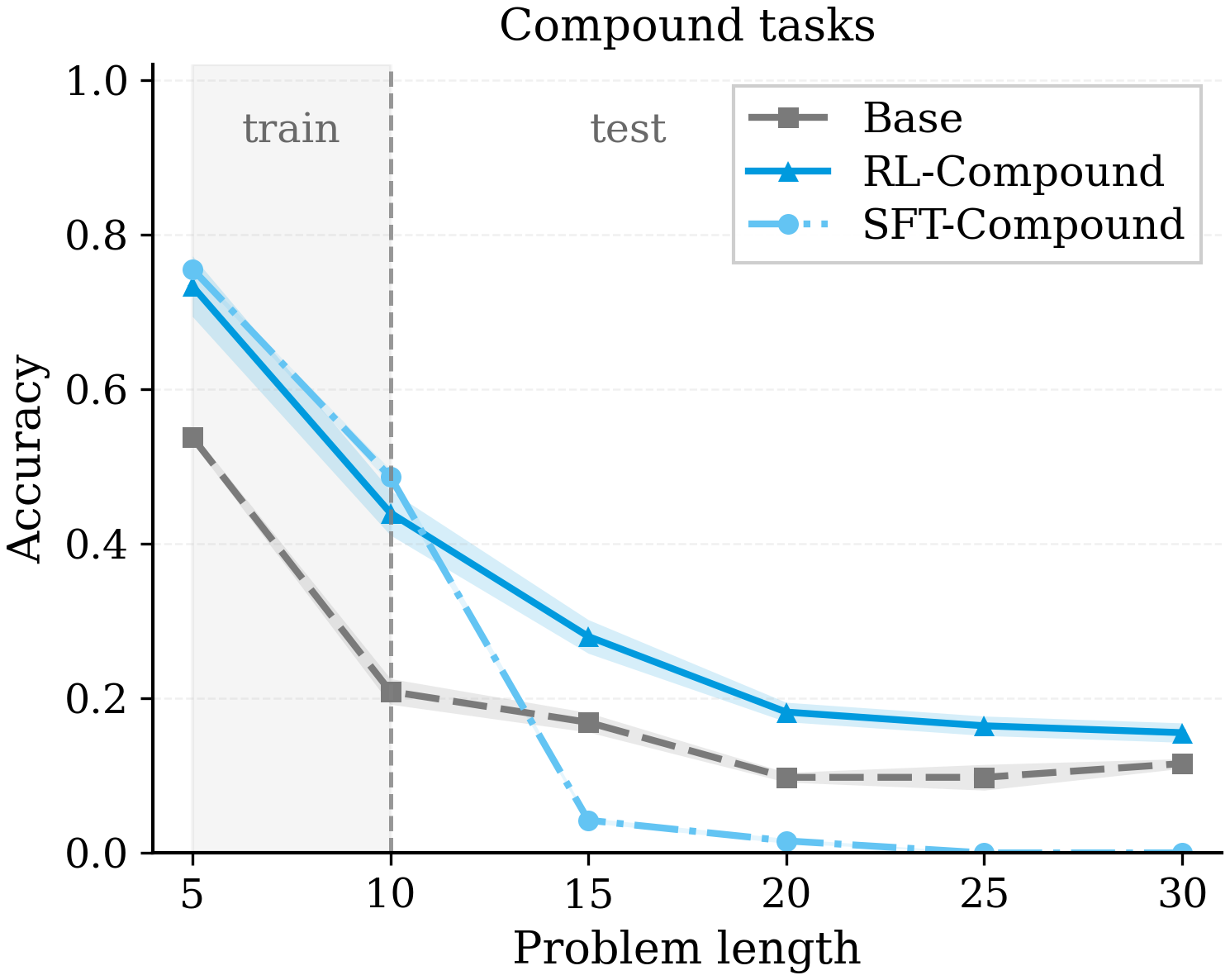}
    \end{minipage}

    \caption{Comparison of post-training methods: supervised fine-tuning (SFT) vs. reinforcement learning (RL) for multi-skill chained composition.}
    \label{fig:sft-rl}
\end{minipage}

\end{figure}

\paragraph{SFT vs. RL post-training.}
Both supervised fine-tuning (SFT) and reinforcement learning (RL) are widely used post-training methods. We further compare RL with supervised fine-tuning (SFT) using programmatically generated SFT training samples. Data-structure tasks make this feasible because their algorithmic structure allows chain-of-thought templates for each skill to be instantiated with task-specific values. In principle, these structured traces should make the training pattern easy for SFT to learn. However, as shown in Figure~\ref{fig:sft-rl}, SFT performs well in-domain but degrades sharply on larger out-of-domain instances, whereas RL shows better length generalization. This suggests that SFT mainly learns the in-domain pattern, while RL better supports generalization beyond the training regime, consistent with prior findings~\citep{chu2025sft}. Details of SFT training are in Appendix \ref{app:experiment-details}.

\subsection{Level III (Branch-merge graph): Level II + Non-local dependency} \label{sec:dag-compositionality}

\begin{wrapfigure}{r}{0.5\linewidth}
\vspace{-3mm}
    \centering
    \resizebox{\linewidth}{!}{
    \resizebox{\linewidth}{!}{%
\begin{tikzpicture}[
    x=1cm, y=1cm, >=Latex,
    panel/.style={
        draw=black!25, fill=black!3, rounded corners=4pt,
        line width=0.5pt
    },
    box/.style={
        draw=black!60, rounded corners=6pt, line width=0.75pt,
        minimum width=1.82cm, minimum height=0.66cm,
        inner sep=2pt, align=center, font=\footnotesize
    },
    split/.style={box, fill=yellow!35, minimum width=2.05cm},
    leaf/.style={box, fill=blue!18, minimum width=2.00cm},
    merge/.style={box, fill=teal!22, minimum width=2.05cm},
    stepnode/.style={
        draw=black!55, rounded corners=7pt, line width=0.8pt,
        minimum width=2.15cm, minimum height=0.70cm,
        inner sep=2pt, align=center, font=\footnotesize\bfseries
    },
    stepa/.style={stepnode, fill=yellow!25},
    stepb/.style={stepnode, fill=blue!13},
    stepc/.style={stepnode, fill=teal!18},
    dep/.style={->, line width=1.0pt, draw=black!65},
    faint/.style={->, densely dotted, line width=0.7pt, draw=black!45},
    callout/.style={
        font=\footnotesize\bfseries, text=black!70,
        fill=white, inner sep=1.5pt, rounded corners=2pt
    },
    edgelab/.style={
        font=\footnotesize, text=black!70,
        fill=black!3, inner sep=1pt
    },
    title/.style={font=\bfseries\normalsize, align=center}
]

% ------------------------------------------------------------------
% Bounding box and outer panel
% ------------------------------------------------------------------
\path[use as bounding box] (0,0) rectangle (10.8,-5.9);
\draw[panel] (0,0) rectangle (10.8,-5.9);

% ------------------------------------------------------------------
% Title
% ------------------------------------------------------------------
\node[title] at (5.40,-0.30)
    {Segment-tree construction as a branch--merge graph};
\node[font=\small, text=black!70] at (5.40,-0.74)
    {Example input array: $[2,-1,3]$};

% ------------------------------------------------------------------
% Right-side vertical step nodes
% ------------------------------------------------------------------
\node[font=\small\bfseries] at (9.20,-1.25) {Skills};
\node[stepa] (st1) at (9.20,-2.05) {Split index\\interval};
\node[stepb] (st2) at (9.20,-3.15) {Solve leaf\\branches};
\node[stepc] (st3) at (9.20,-4.20) {Merge};

%\draw[dep] (st1) -- (st2);
%\draw[dep] (st2) -- (st3);

% ------------------------------------------------------------------
% Main branch-merge graph
% ------------------------------------------------------------------
\node[split] (S13) at (4.30,-1.45) {\textbf{Split} $[1,3]$};

\node[split] (S12) at (2.35,-2.70) {\textbf{Split} $[1,2]$};
\node[leaf]  (L33) at (6.25,-2.70) {\textbf{Leaf} $[3,3]$};

\node[leaf]  (L11) at (1.30,-4.05) {\textbf{Leaf} $[1,1]$};
\node[leaf]  (L22) at (3.40,-4.05) {\textbf{Leaf} $[2,2]$};

\node[merge] (M12) at (2.35,-5.30) {\textbf{Merge} $[1,2]$};
\node[merge] (M13) at (5.15,-5.30) {\textbf{Merge} $[1,3]$};

% Final output as text
\node[font=\small\bfseries, align=left, text=black!80]
    (OUT) at (7.8,-5.30) {Final\\segment tree};

% ------------------------------------------------------------------
% Edges
% ------------------------------------------------------------------
\draw[dep] (S13) -- node[edgelab, above left] {$[1,2]$} (S12);
\draw[dep] (S13) -- node[edgelab, above right] {$[3,3]$} (L33);

\draw[dep] (S12) -- node[edgelab, above left] {$\text{value}=2$} (L11);
\draw[dep] (S12) -- node[edgelab, above right] {$\text{value}=-1$} (L22);

\draw[dep] (L11) -- (M12);
\draw[dep] (L22) -- (M12);

\draw[dep] (M12) -- (M13);
\draw[dep] (L33) --
    node[edgelab, right, pos=0.7, xshift=3pt] {$\text{value}=3$}
    (M13);

\draw[dep] (M13) -- (OUT);

% ------------------------------------------------------------------
% Conceptual callouts
% ------------------------------------------------------------------
% \node[callout, fill=yellow!12] at (5.15,-2.05) {branch};
% \draw[faint] (4.95,-2.08)
%     .. controls (4.48,-2.22) and (4.18,-2.38) ..
%     (4.00,-2.60);
% \draw[faint] (5.35,-2.08)
%     .. controls (5.65,-2.22) and (5.80,-2.38) ..
%     (5.88,-2.60);

% \node[callout, fill=blue!8] at (4.25,-4.10) {independent branches};

% \node[callout, fill=teal!10] at (3.15,-4.85) {merge};
% \draw[faint] (2.92,-4.87) -- (2.30,-5.02);

% \node[callout, fill=teal!10] at (5.75,-4.72)
%     {combine earlier results};

\end{tikzpicture}%
}
    }
\caption{A simplified example illustration.}
\label{fig:segment-branch-merge}
\vspace{-2mm}
\end{wrapfigure}

As contrasted in Section \ref{sec:compositionality-framework}, unlike the linear compositions above, branch--merge graphs represent non-linear composition: computation branches into multiple intermediate chains and later merges their results. A valid reasoning trace may correspond to any topological ordering of the dependency graph, introducing a third bottleneck beyond horizon generalization and skill switching: non-local dependency coordination. At merge points, the model must retrieve, align, and combine outputs from earlier branches, rather than simply propagate the immediately preceding state. We use segment-tree construction as the branch--merge task, illustrated in Figure~\ref{fig:segment-branch-merge}, with details deferred to Appendix~\ref{app:experiment-details}. 

\paragraph{Compositional challenge 3: non-local dependency.}
The decomposed-to-composed asymmetry is most significant in Level III (Branch--merge graph) of Figure~\ref{fig:decompose_compose}. All methods remain near ceiling on atomic tasks, indicating that the underlying skills are not difficult. However, a huge gap emerges on compound tasks: RL-Atomic gives little improvement over the base model and quickly collapses to near-zero accuracy, whereas RL-Compound performs substantially better. This suggests that decomposed-skill training is insufficient when reasoning requires coordinating non-local dependencies across branches. We note RL-Compound also degrades sharply as problem length increases, showing that non-local coordination is a stronger bottleneck than long horizon or skill switching alone. 

\subsection{Beyond length: structural shift and unseen skills}

Above, we primarily vary problem length or execution horizon. We now test whether composed-task training generalizes under two additional forms of shift. First, Figure~\ref{fig:compositional-generalization}(a,b) tests robustness under structural distribution shift by varying BST skewness and the graph-construction threshold (which determines whether an edge is added between two nodes); composed-task training remains robust, though its gains depend on structural difficulty. Second, Figure~\ref{fig:compositional-generalization}(c) tests transfer to tasks requiring unseen skills: multi-dimensional computation in KD-Tree and Geometric Graph construction, and mapping natural-language scenarios to formal BST and graph tasks in Clinical Appointments and Galaxy Traveling. Performance improves only modestly, suggesting that composed training alone is insufficient when new reasoning skills are required. Experiment details can be found in Appendix~\ref{sec:generalization}.

\begin{figure}[h]
    \centering
    \includegraphics[width=\linewidth]{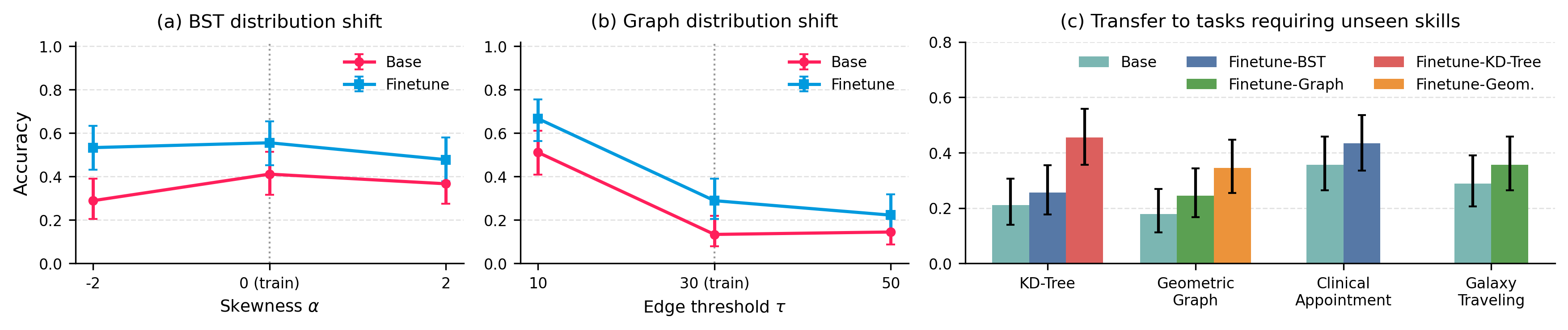}
    \caption{(a,b) Distribution-shift generalization on BST and graph construction. (c) Transfer to multi-dimensional and natural-language tasks. Error bars denote 95\% confidence intervals.}
    \label{fig:compositional-generalization}
\end{figure}

\section{Theoretical explanation of the asymmetry}
\label{sec:theory}

We provide a theoretical explanation of the underlying mechanisms for the observed decomposed-to-composed asymmetry. Full statements, proofs, and additional analysis are deferred to Appendix~\ref{sec:apx_proofs}.

\subsection{Why transfer from decomposition to composition is hard}
\label{sec:de-to-com-hard}

Following the compositionality framework in Section \ref{sec:compositionality-framework}, a task $\tau$ is represented by a dependency graph $G_{\tau} = (V_\tau, E_\tau)$. Each node $v \in V_\tau$ is a skill invocation with an input $x_v$ and an output $z_v$. Atomic tasks involve only one skill invocation; \emph{compound} tasks require multiple dependent invocations before returning the sink-node output $z_{v_{\text{sink}}}$. 

We model an autoregressive solution as a serialized rollout of $G_\tau$ (Figure \ref{fig:linear-nonlinear}). For a \emph{compound} task with horizon $|V_\tau| = T$, the step-$t$ state $s_t$ encodes the task, the executed-node set $P(s_t) \subset V_\tau$, and all intermediate outputs so far. The set of nodes that are ready to execute next is $R(s_t) = \{ v \in V_{\tau} \setminus P(s_t) : \mathrm{pred}(v) \subseteq P(s_t) \}$. Correctness is defined over a set of next invocation-output pairs because branch--merge graphs may admit several valid topological orderings. Define the \emph{decision context} 
$Q(s_t)=\{(\kappa(v),x_v(s_t)):v\in R(s_t)\}$, where $x_v(s_t)$ is the input to ready node
$v$. Identifying invocations by skill type and input rather than node name, the set of
valid invocation-output pairs is

 \[
 \mathcal I^* (s_t) = \{((\kappa,x),f_\kappa(x)):(\kappa,x)\in
Q(s_t)\}.
 \]
Therefore, a policy $\hat{\pi}$ makes a mistake when $\hat{\pi}(s)\notin \mathcal I^*(s)$. Each valid step completes one ready node, so a rollout with no mistakes in $T$
steps produces the correct final output. For stochastic policies, we assume the output
distribution depends on the rollout history only through the current state; probabilities
include policy randomness.

Let $q$ be the decision-context distribution induced by the \emph{atomic} task training data. Define the atomic-distribution error as
$\epsilon_{\mathrm{atom}}(\hat\pi;q) = 
\mathbb E_{c\sim
q}[e_{\hat\pi}(c)]$.

For compound tasks $\tau \sim \rho_T$, let $\mu_t$ be the distribution of $Q(s_t)$ conditioned on there being no earlier mistakes. Assuming every context in the support of $\mu_t$ has nonzero probability under $q$ for every $t\in[T]$, define the coverage factor
\[
C_T(q)=\sup_{c:q(c)>0}\frac{\sum_{t=1}^T\mu_t(c)}{q(c)}<\infty,
\label{eq:main-coverage-factor}
\]
which measures how well the training data covers contexts encountered across all steps of a compound rollout. It is large when a correct rollout contains contexts that are rare under atomic supervision.

\begin{lemma}[informal]
\label{lem:main-coverage}
Under the preceding assumptions, let $F=1$ denote the event that the model fails to produce the correct final output $z_{v_{\text{sink}}}$ on a compound task drawn from $\rho_T$. Then, 
\begin{equation}
\Pr[F=1]
\le
C_T(q) \,\epsilon_{\mathrm{atom}}(\hat\pi;q).
\label{eq:main-coverage-bound}
\end{equation}
Moreover, this bound is tight in the worst case.
\end{lemma}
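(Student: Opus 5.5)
The plan has two parts: an upper bound obtained by a union bound over the first mistake, followed by a change of measure to the atomic distribution $q$, and then a matching construction for tightness.

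\textbf{Upper bound.} Here $e_{\hat\pi}(c)$ is the probability that $\hat\pi$ errs at a state with decision context $c$. This depends only on the context, by the assumption that outputs depend on history only through the current state and that correctness is defined via $\mathcal I^*(s)$, which is a function of $Q(s)$.

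First, the failure event is contained in the event that some step makes a mistake. As noted before the statement, a rollout with no mistakes in $T$ steps completes every node in topological order and outputs $z_{v_{\mathrm{sink}}}$. So $F=1$ implies there is a first mistake time $t\in[T]$. This gives
\[
\Pr[F=1]\le\sum_{t=1}^T \Pr[\text{no mistake before } t,\ \text{mistake at } t]
\le \sum_{t=1}^T \mathbb E_{c\sim\mu_t}[e_{\hat\pi}(c)].
\]
The second inequality uses that, conditioned on no earlier mistake, the context $Q(s_t)$ is distributed as $\mu_t$ and the conditional error probability is $e_{\hat\pi}(Q(s_t))$. It also uses that the probability of no earlier mistake is at most $1$.

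Next, exchange sum and expectation and apply the change of measure. Since $\mathrm{supp}(\mu_t)\subseteq\mathrm{supp}(q)$,
\[
\sum_t \mathbb E_{\mu_t}[e]=\sum_{c:q(c)>0}q(c)\,\frac{\sum_t\mu_t(c)}{q(c)}\,e(c)\le C_T(q)\,\epsilon_{\mathrm{atom}}(\hat\pi;q).
\]
For general (non-discrete) context spaces, I would replace the ratio by the Radon--Nikodym derivative $d(\sum_t\mu_t)/dq$ and take its essential supremum.

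\textbf{Tightness.} For the worst case, I will construct an instance attaining equality (or approaching it). Take a Level~I chain, and let $c^*$ attain (or nearly attain) the supremum in $C_T(q)$. Choose $\hat\pi$ that errs deterministically exactly at context $c^*$ and is correct elsewhere, so $\epsilon_{\mathrm{atom}}=q(c^*)$.

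Make the chain visit $c^*$ at most once per rollout, for instance by letting the contexts include the step index or the evolving state, so visits at distinct $t$ are disjoint events. Then $\Pr[F=1]=\sum_t\mu_t(c^*)$ exactly. The reason is that no error occurs before the first visit to $c^*$, so conditioning on "no earlier mistake" is vacuous up to that point. A mistake must also actually corrupt the output, which I arrange by letting wrong outputs propagate to a wrong sink value.

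This gives $\Pr[F=1]=C_T(q)\epsilon_{\mathrm{atom}}$, provided $C_T(q)\epsilon_{\mathrm{atom}}\le1$. Otherwise the bound is trivially loose, and "tight" should be read as attainable whenever it is at most $1$.

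\textbf{Main obstacle.} The delicate step is the conditioning in the union bound. $\mu_t$ is defined conditioned on no earlier mistakes, so I must check that the joint probability of "no mistake before $t$ and mistake at $t$" equals $\Pr[\text{no earlier mistake}]\cdot\mathbb E_{\mu_t}[e]$. This requires the Markov-in-state assumption stated in the setup, and I will spell that out carefully. In the tightness construction, the corresponding care is needed to make visits to $c^*$ disjoint across steps.
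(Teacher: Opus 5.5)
Your upper bound follows the paper's argument: bound $\Pr[F=1]$ by $\sum_t\Pr[M_t=1]$, drop the survival factor $\Pr[M_1=\cdots=M_{t-1}=0]\le 1$, write the conditional step error as $\sum_c\mu_t(c)e_{\hat\pi}(c)$, and change measure to $q$. One justification is wrong, though. The claim that $e_{\hat\pi}$ depends on the state only through $Q(s)$ does not follow from the Markov-in-state assumption together with $\mathcal I^*(s)$ being a function of $Q(s)$. The policy sees the whole state, including the task instance. So an atomic prompt and step $t$ of a compound prompt can share a decision context, and hence the same valid set, yet still have different error probabilities. The paper imposes this as a separate hypothesis, context invariance. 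It is exactly the bridge between errors along compound rollouts and $\epsilon_{\mathrm{atom}}(\hat\pi;q)$, which is measured on atomic states. You should cite it as an assumption rather than derive it.

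Your tightness argument takes a more generic route than the paper, and as written it does not work. If $\hat\pi$ errs exactly at $c^*$ and every mistake is fatal, then $\Pr[F=1]=\sum_t\Pr[M_1=\cdots=M_{t-1}=0]\,\mu_t(c^*)$. Equality with $\sum_t\mu_t(c^*)$ therefore needs survival probability $1$ at every step where $\mu_t(c^*)>0$.

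\emph{At most one visit per rollout} does not ensure this. Suppose $c^*$ occurs at step $1$ in task instances of total probability $p$, and at step $2$ in others of probability $p$. Conditioning at step $2$ removes the former, and $\Pr[F=1]=2p<p+p/(1-p)=\sum_t\mu_t(c^*)\le C_T(q)\,\epsilon_{\mathrm{atom}}$. This happens in Level~I chains whose initial structures vary in size, so your ``evolving state'' option fails. Only contexts that pin $c^*$ to a single step index work.

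There is also a circularity. $\mu_t$ is conditioned on no earlier mistakes of $\hat\pi$. Choosing $c^*$ as the maximizer in $C_T(q)$ and then defining $\hat\pi$ to err there reweights $\mu_t$ at later steps, which can move the supremum to another context.

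The paper avoids both issues with an explicit instance:
\begin{itemize}
\item $\rho_T$ is a point mass on the chain inserting $1,\ldots,T$ into an empty list;
\item $q$ is uniform over the $T$ pairwise distinct contexts $c_t$;
\item $\hat\pi$ errs only at the last context $c_T$.
\end{itemize}
Then no mistake can precede step $T$, and each $\mu_t$ is the point mass on $c_t$. This gives $C_T(q)=T$, $\epsilon_{\mathrm{atom}}=1/T$, and $\Pr[F=1]=1$.

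Your sketch is repaired by the same restrictions: make $c^*$ reachable at a single step and place it at the final step. The paper's concrete choice also shows that the $T$-dependence absorbed into $C_T(q)$ cannot be removed (Proposition~\ref{prop:no-horizon-free-transfer}).
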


The bound in Equation~\ref{eq:main-coverage-bound} shows that decomposed-to-composed transfer requires more than low atomic error. The bound accumulates step errors across all $T$
rollout steps, under decision-context distributions that atomic training may not cover well. Although the bound has no explicit multiplicative factor $T$, it is not horizon-free: whenever the policy can reach every rollout position with positive mistake-free probability, the cumulative coverage factor must satisfy
$C_T(q)\ge T$. Thus, matching $q(c)$ to $T^{-1}\sum_{t=1}^T\mu_t(c)$ recovers an upper bound of $T\epsilon_{\mathrm{atom}}(\hat\pi;q)$, while a poorer coverage weakens this guarantee. Proposition~\ref{prop:no-horizon-free-transfer} in Appendix~\ref{sec:apx_b1} shows that this horizon dependence is unavoidable.

Lemma~\ref{lem:main-coverage} also helps
interpret differences across the three levels of compositionality. Level~I increases the rollout horizon, so even repeatedly applying a single skill leads to a $T\epsilon_{\mathrm{atom}}(\hat\pi;q)$-scaling in the worse case. Level~II introduces skill switches, so intermediate outputs must be carried across states around transitions between skill types. These
transitions can produce skill-input pairs that are rare under $q$, increasing $C_T(q)$, consistent with Figure~\ref{fig:skill-switch}. Level~III adds merge points whose correct execution depends on multiple predecessor outputs, and the
resulting input tuples may be rare under $q$. Level III can also expose several distinct ready skill-input pairs simultaneously, creating non-singleton contexts that lie outside the support of any purely atomic training distribution. When reached with positive probability before the first mistake, these contexts
create a structural gap in coverage: atomic error places no constraint on the policy's error at these decisions. Thus, even perfect atomic accuracy cannot rule out compound failure without additional assumptions linking performance on singleton and multi-invocation contexts.

\subsection{Why transfer from composition to decomposition is easier}
\label{sec:co-to-de-easy}

Performance generalization from compound to atomic tasks avoids accumulating errors across the rollout. Assume each compound task has a unique source node, so its initial decision context is a singleton. Therefore, under a non-recoverability assumption, good compound performance directly constrains the model's error on the first-step context induced by the compound-task distribution. 

Let $\epsilon_{\mathrm{comp}}(\hat{\pi}; \rho_T)$ 
be the compound-task error, and let $s_1$ be the initial state for $\tau \sim \rho_T$. The first-step error induced by the compound task distribution $\rho_T$ is $\epsilon_{\mathrm{atom}}^{(1)}(\hat\pi;\rho_T)= \Pr[\hat\pi(s_1)\notin\mathcal
I^*(s_1)]$. 
If a mistake in the first step necessarily prevents successful completion, then
$\epsilon_{\mathrm{atom}}^{(1)}(\hat\pi;\rho_T)
\le
\epsilon_{\mathrm{comp}}(\hat\pi;\rho_T)$
because every first-step error is a compound failure. 

For atomic test cases drawn from a target context distribution $q$ covered by this first-step marginal, the same argument holds up to a coverage factor under
context invariance.
Let $q_1^{\rho_T}$ be the distribution over first-step decision
contexts induced by compound tasks, i.e.,
$q_1^{\rho_T}(c)=\Pr_{\tau\sim\rho_T}[Q(s_1)=c]$. Assuming $q$ puts no mass on contexts outside the support of $q_1^{\rho_T}$ and the coverage factor below is finite, we have
\begin{equation*}
\epsilon_{\mathrm{atom}}(\hat\pi;q)
\le
C_1(q,{\rho_T})\,
\epsilon_{\mathrm{comp}}(\hat\pi;{\rho_T}),
\qquad
\text{where }C_1(q,{\rho_T})
=
\sup_{c:q_1^{\rho_T}(c)>0}\frac{q(c)}{q_1^{\rho_T}(c)}.
\label{eq:main-reverse-coverage}
\end{equation*}

When $q=q_1^{\rho_T}$, $C_1(q,{\rho_T})=1$, and small compound error implies small atomic error on this context distribution. Thus, under these assumptions, good compound performance controls error on covered atomic test cases,
consistent with the RL-Compound results in Figures~\ref{fig:decompose_compose} and~\ref{fig:olmo-llama-atomic-compound}.

\section{Pilot study: multi-call tool use}
\label{sec:pilot}

\begin{wrapfigure}{r}{0.45\linewidth}
\vspace{-1em}
\centering
\includegraphics[width=\linewidth]{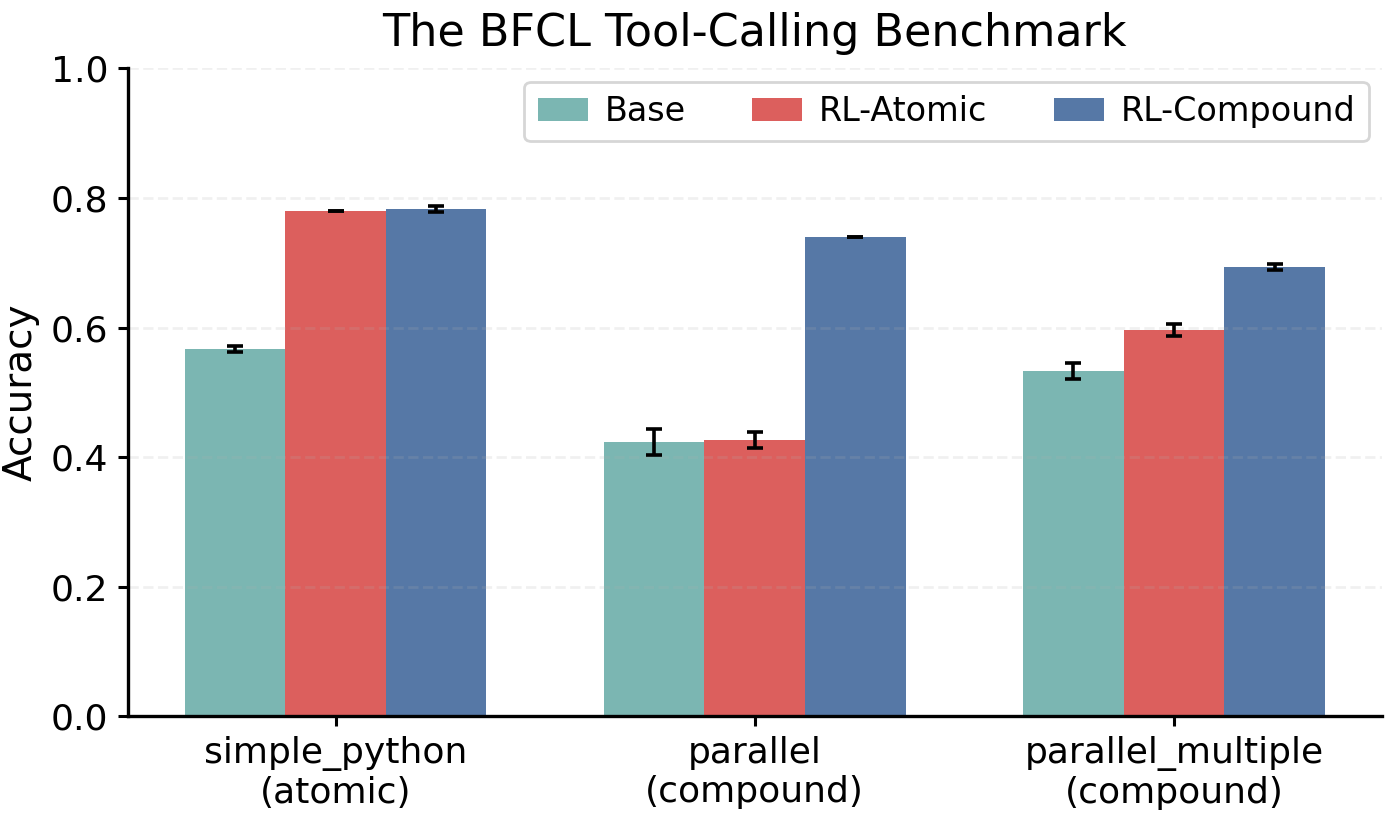}
\caption{Decomposed-to-composed asymmetry in a tool-calling benchmark.}
\label{fig:bfcl}
\vspace{-1em}
\end{wrapfigure}

We show preliminary evidence of practical implications for RL post-training data design as language models are increasingly applied to complex real-world tasks. We provide a pilot test using the Berkeley Function Calling Leaderboard (BFCL) \citep{patil2025bfcl}, a benchmark that maps user requests to valid tool calls. Here, a skill corresponds to a tool-calling schema for a specific function. We define atomic tasks using the \texttt{simple\_python} split, where each example requires one function call, and compound tasks using the \texttt{parallel} and \texttt{parallel\_multiple} splits, where each request requires multiple calls. We use the same RL setup as main experiments with Qwen3-4B-Instruct. Figure~\ref{fig:bfcl} shows the same asymmetry: decomposed-skill training does not generalize well to compound tasks, whereas composed-task training performs well on both tasks. The results further motivate composed-task training as a necessary and beneficial data-design principle for RL post-training. Dataset description and experiment details can be found in Appendix~\ref{app:experiment-details}. We discuss potential implications of this design principle for proof writing, coding agents, and tool-calling agents in Appendix~\ref{sec:practical-guide-data}.

\section{Conclusion}\label{sec:conclusion}
We study compositional reasoning in language models under RL post-training, formalizing it as the execution of reusable skills over dependency graphs with three levels of compositional complexity. Using controlled data-structure tasks, we evaluate these levels and find a consistent decomposed-to-composed asymmetry: training on decomposed skills transfers poorly to composed tasks, whereas training on composed tasks transfers more readily to decomposed settings. We explain this asymmetry through error compounding and coverage shift, and further evaluate generalization beyond length extrapolation under structural distribution shifts and transfer to tasks requiring unseen skills. A pilot study in multi-call tool use provides preliminary evidence that the same asymmetry can arise in a practical domain. Overall, our results suggest that RL post-training cannot rely on decomposed skill supervision alone: exposing models to structured interactions among skills is important for improving composed reasoning while preserving performance on decomposed skills.

\paragraph{Limitations and future work.}
We primarily evaluate compound tasks using final-output accuracy. Although intermediate-state accuracy could be informative, our preliminary studies found that requiring intermediate outputs reduced final accuracy, and potentially constrained models' ability to reason freely. Similarly, using Structured Outputs with JSON also reduced accuracy, likely due to added formatting burden. These design choices are further discussed in Appendix \ref{app:structured-output}. Future work could use LLM-based judging to track intermediate reasoning traces and parse final outputs. In addition, our empirical evaluation focuses on controlled data-structure tasks and broader validation on real-world benchmarks remains an important direction, with our BFCL pilot study as a first step.

\section*{Contribution statement}
YH led the project and contributed to the conceptual development, empirical components, and writing of the manuscript. YL contributed to the theoretical proofs on asymmetry. YW contributed to the conceptual framing and research discussions. EV advised the project.

\begin{ack}
YH is supported by a Cubist PhD Fellowship. YL is supported by an Amazon AI Fellowship. EV gratefully acknowledges support from the National Science Foundation under award CCF-2338226 and the AI2050 program at Schmidt Sciences. We thank Connor Lawless, Anders Wikum, and Nikil Selvam for their feedback on this manuscript. 
\end{ack}

\bibliographystyle{plainnat}
\bibliography{references}

@misc{grattafiori2024llama3herdmodels,
      title={The Llama 3 Herd of Models}, 
      author={Aaron Grattafiori et al.},
      year={2024},
      eprint={2407.21783},
      archivePrefix={arXiv},
      primaryClass={cs.AI},
      url={https://arxiv.org/abs/2407.21783}, 
}

@inproceedings{
    zhao2024can,
    title={Can Models Learn Skill Composition from Examples?},
    author={Haoyu Zhao and Simran Kaur and Dingli Yu and Anirudh Goyal and Sanjeev Arora},
    booktitle={The Thirty-eighth Annual Conference on Neural Information Processing Systems},
    year={2024},
    url={https://openreview.net/forum?id=1sLdprsbmk}
}

@inproceedings{
    yu2024skillmix,
    title={{SKILL}-{MIX}: a Flexible and Expandable Family of Evaluations for {AI} Models},
    author={Dingli Yu and Simran Kaur and Arushi Gupta and Jonah Brown-Cohen and Anirudh Goyal and Sanjeev Arora},
    booktitle={The Twelfth International Conference on Learning Representations},
    year={2024},
    url={https://openreview.net/forum?id=Jf5gplvglq}
}

@inproceedings{
    li2025unveiling,
    title={Unveiling the Compositional Ability Gap in Vision-Language Reasoning Model},
    author={Tianle Li and Jihai Zhang and Yongming Rao and Yu Cheng},
    booktitle={The Thirty-ninth Annual Conference on Neural Information Processing Systems},
    year={2025},
    url={https://openreview.net/forum?id=J76cCYTJub}
}

@inproceedings{
    redhardt2025scaling,
    title={Scaling can lead to compositional generalization},
    author={Florian Redhardt and Yassir Akram and Simon Schug},
    booktitle={The Thirty-ninth Annual Conference on Neural Information Processing Systems},
    year={2025},
    url={https://openreview.net/forum?id=hZt0daVIZi}
}

@inproceedings{
    ye2025how,
    title={How do Transformers Learn Implicit Reasoning?},
    author={Jiaran Ye and Zijun Yao and Zhidian Huang and Liangming Pan and Jinxin Liu and Yushi Bai and Amy Xin and Liu Weichuan and Xiaoyin Che and Lei Hou and Juanzi Li},
    booktitle={The Thirty-ninth Annual Conference on Neural Information Processing Systems},
    year={2025},
    url={https://openreview.net/forum?id=19ygs48nOa}
}

@inproceedings{
    chang2026characterizing,
    title={Characterizing Pattern Matching and Its Limits on Compositional Task Structures},
    author={Hoyeon Chang and Jinho Park and Hanseul Cho and Sohee Yang and Miyoung Ko and Hyeonbin Hwang and Seungpil Won and Dohaeng Lee and Youbin Ahn and Minjoon Seo},
    booktitle={The Fourteenth International Conference on Learning Representations},
    year={2026},
    url={https://openreview.net/forum?id=VCjlm003WL}
}

@inproceedings{yang-etal-2024-exploring,
    title = "Exploring Compositional Generalization of Large Language Models",
    author = "Yang, Haoran  and
      Lu, Hongyuan  and
      Lam, Wai  and
      Cai, Deng",
    editor = "Cao, Yang (Trista)  and
      Papadimitriou, Isabel  and
      Ovalle, Anaelia  and
      Zampieri, Marcos  and
      Ferraro, Francis  and
      Swayamdipta, Swabha",
    booktitle = "Proceedings of the 2024 Conference of the North American Chapter of the Association for Computational Linguistics: Human Language Technologies (Volume 4: Student Research Workshop)",
    month = jun,
    year = "2024",
    address = "Mexico City, Mexico",
    publisher = "Association for Computational Linguistics",
    url = "https://aclanthology.org/2024.naacl-srw.3/",
    doi = "10.18653/v1/2024.naacl-srw.3",
    pages = "16--24"
}

@misc{ahuja2025provablelengthcompositionalgeneralization,
      title={On Provable Length and Compositional Generalization}, 
      author={Kartik Ahuja and Amin Mansouri},
      year={2025},
      eprint={2402.04875},
      archivePrefix={arXiv},
      primaryClass={cs.LG},
      url={https://arxiv.org/abs/2402.04875}, 
}

@inproceedings{
    ramesh2024compositional,
    title={Compositional Capabilities of Autoregressive Transformers: A Study on Synthetic, Interpretable Tasks},
    author={Rahul Ramesh and Ekdeep Singh Lubana and Mikail Khona and Robert P. Dick and Hidenori Tanaka},
    booktitle={Forty-first International Conference on Machine Learning},
    year={2024},
    url={https://openreview.net/forum?id=L1eJ3NKPCd}
}

@misc{qwen3technicalreport,
      title={Qwen3 Technical Report}, 
      author={Qwen Team},
      year={2025},
      eprint={2505.09388},
      archivePrefix={arXiv},
      primaryClass={cs.CL},
      url={https://arxiv.org/abs/2505.09388}, 
}

@misc{olmo2025olmo3,
title={Olmo 3},
author={Team Olmo and Allyson Ettinger et al.},
year={2025},
eprint={2512.13961},
archivePrefix={arXiv},
primaryClass={cs.CL},
url={https://arxiv.org/abs/2512.13961},
}

@inproceedings{
chu2025sft,
title={{SFT} Memorizes, {RL} Generalizes: A Comparative Study of Foundation Model Post-training},
author={Tianzhe Chu and Yuexiang Zhai and Jihan Yang and Shengbang Tong and Saining Xie and Dale Schuurmans and Quoc V Le and Sergey Levine and Yi Ma},
booktitle={Forty-second International Conference on Machine Learning},
year={2025},
url={https://openreview.net/forum?id=dYur3yabMj}
}

@misc{he2026llmsreasonstructurallybenchmarking,
      title={Can LLMs Reason Structurally? Benchmarking via the Lens of Data Structures}, 
      author={Yu He and Yingxi Li and Colin White and Ellen Vitercik},
      year={2026},
      eprint={2505.24069},
      archivePrefix={arXiv},
      primaryClass={cs.LG},
      url={https://arxiv.org/abs/2505.24069}, 
}

@InProceedings{pmlr-v80-lake18a,
  title = 	 {Generalization without Systematicity: On the Compositional Skills of Sequence-to-Sequence Recurrent Networks},
  author =       {Lake, Brenden and Baroni, Marco},
  booktitle = 	 {Proceedings of the 35th International Conference on Machine Learning},
  pages = 	 {2873--2882},
  year = 	 {2018},
  editor = 	 {Dy, Jennifer and Krause, Andreas},
  volume = 	 {80},
  series = 	 {Proceedings of Machine Learning Research},
  month = 	 {10--15 Jul},
  publisher =    {PMLR},
}

@inproceedings{
keysers2020measuring,
title={Measuring Compositional Generalization: A Comprehensive Method on Realistic Data},
author={Daniel Keysers and Nathanael Sch{\"a}rli and Nathan Scales and Hylke Buisman and Daniel Furrer and Sergii Kashubin and Nikola Momchev and Danila Sinopalnikov and Lukasz Stafiniak and Tibor Tihon and Dmitry Tsarkov and Xiao Wang and Marc van Zee and Olivier Bousquet},
booktitle={International Conference on Learning Representations},
year={2020},
}

@inproceedings{kim-linzen-2020-cogs,
    title = "{COGS}: A Compositional Generalization Challenge Based on Semantic Interpretation",
    author = "Kim, Najoung  and
      Linzen, Tal",
    editor = "Webber, Bonnie  and
      Cohn, Trevor  and
      He, Yulan  and
      Liu, Yang",
    booktitle = "Proceedings of the 2020 Conference on Empirical Methods in Natural Language Processing (EMNLP)",
    month = nov,
    year = "2020",
    address = "Online",
    publisher = "Association for Computational Linguistics",
    url = "https://aclanthology.org/2020.emnlp-main.731/",
    doi = "10.18653/v1/2020.emnlp-main.731",
    pages = "9087--9105",
}

@misc{shao2024deepseekmathpushinglimitsmathematical,
      title={DeepSeekMath: Pushing the Limits of Mathematical Reasoning in Open Language Models}, 
      author={Zhihong Shao and Peiyi Wang and Qihao Zhu and Runxin Xu and Junxiao Song and Xiao Bi and Haowei Zhang and Mingchuan Zhang and Y. K. Li and Y. Wu and Daya Guo},
      year={2024},
      eprint={2402.03300},
      archivePrefix={arXiv},
      primaryClass={cs.CL},
      url={https://arxiv.org/abs/2402.03300}, 
}

@misc{deepseekai2025deepseekr1incentivizingreasoningcapability,
      title={DeepSeek-R1: Incentivizing Reasoning Capability in LLMs via Reinforcement Learning}, 
      author={DeepSeek-AI},
      year={2025},
      eprint={2501.12948},
      archivePrefix={arXiv},
      primaryClass={cs.CL},
      url={https://arxiv.org/abs/2501.12948}, 
}

@inproceedings{
    patil2025bfcl,
    title={The Berkeley Function Calling Leaderboard ({BFCL}): From Tool Use to Agentic Evaluation of Large Language Models},
    author={Shishir G Patil and Huanzhi Mao and Fanjia Yan and Charlie Cheng-Jie Ji and Vishnu Suresh and Ion Stoica and Joseph E. Gonzalez},
    booktitle={Forty-second International Conference on Machine Learning},
    year={2025},
    url={https://openreview.net/forum?id=2GmDdhBdDk}
}

@inproceedings{
    guo2026g1,
    title={\${\textbackslash}texttt\{G1\}\$: Teaching {LLM}s to Reason on Graphs with Reinforcement Learning},
    author={Xiaojun Guo and Ang Li and Yifei Wang and Stefanie Jegelka and Yisen Wang},
    booktitle={The Thirty-ninth Annual Conference on Neural Information Processing Systems},
    year={2026},
    url={https://openreview.net/forum?id=Lq4nneD2xX}
}

@article{sheng2024verl,
  title   = {HybridFlow: A Flexible and Efficient RLHF Framework},
  author  = {Guangming Sheng and Chi Zhang and Zilingfeng Ye and Xibin Wu and Wang Zhang and Ru Zhang and Yanghua Peng and Haibin Lin and Chuan Wu},
  year    = {2024},
  journal = {arXiv preprint arXiv: 2409.19256}
}

@inproceedings{kwon2023vllm,
  title={Efficient Memory Management for Large Language Model Serving with PagedAttention},
  author={Woosuk Kwon and Zhuohan Li and Siyuan Zhuang and Ying Sheng and Lianmin Zheng and Cody Hao Yu and Joseph E. Gonzalez and Hao Zhang and Ion Stoica},
  booktitle={Proceedings of the ACM SIGOPS 29th Symposium on Operating Systems Principles},
  year={2023}
}

@article{yuan2025f,
  title={From $ f (x) $ and $ g (x) $ to $ f (g (x)) $: LLMs Learn New Skills in RL by Composing Old Ones},
  author={Yuan, Lifan and Chen, Weize and Zhang, Yuchen and Cui, Ganqu and Wang, Hanbin and You, Ziming and Ding, Ning and Liu, Zhiyuan and Sun, Maosong and Peng, Hao},
  journal={arXiv preprint arXiv:2509.25123},
  year={2025}
}

@article{xu2026composition,
  title={Composition-RL: Compose Your Verifiable Prompts for Reinforcement Learning of Large Language Models},
  author={Xu, Xin and Bai, Clive and Yang, Kai and Chen, Tianhao and Chen, Yangkun and Liu, Weijie and Chen, Hao and Wang, Yang and Yang, Saiyong and Yang, Can},
  journal={arXiv preprint arXiv:2602.12036},
  year={2026}
}

@article{dziri2023faith,
  title={Faith and fate: Limits of transformers on compositionality (2023)},
  author={Dziri, Nouha and Lu, Ximing and Sclar, Melanie and Li, Xiang Lorraine and Jiang, Liwei and Lin, Bill Yuchen and West, Peter and Bhagavatula, Chandra and Le Bras, Ronan and Hwang, Jena D and others},
  journal={arXiv preprint arXiv:2305.18654},
  volume={3},
  year={2023}
}

@article{yin2025learning,
  title={Learning Composable Chains-of-Thought},
  author={Yin, Fangcong and Liu, Zeyu Leo and Leqi, Liu and Ye, Xi and Durrett, Greg},
  journal={arXiv preprint arXiv:2505.22635},
  year={2025}
}

@inproceedings{
lambert2025tulu,
title={Tulu 3: Pushing Frontiers in Open Language Model Post-Training},
author={Nathan Lambert and Jacob Morrison and Valentina Pyatkin and Shengyi Huang and Hamish Ivison and Faeze Brahman and Lester James Validad Miranda and Alisa Liu and Nouha Dziri and Xinxi Lyu and Yuling Gu and Saumya Malik and Victoria Graf and Jena D. Hwang and Jiangjiang Yang and Ronan Le Bras and Oyvind Tafjord and Christopher Wilhelm and Luca Soldaini and Noah A. Smith and Yizhong Wang and Pradeep Dasigi and Hannaneh Hajishirzi},
booktitle={Second Conference on Language Modeling},
year={2025},
url={https://openreview.net/forum?id=i1uGbfHHpH}
}

@inproceedings{
yue2026does,
title={Does Reinforcement Learning Really Incentivize Reasoning Capacity in {LLM}s Beyond the Base Model?},
author={Yang Yue and Zhiqi Chen and Rui Lu and Andrew Zhao and Zhaokai Wang and Yang Yue and Shiji Song and Gao Huang},
booktitle={The Thirty-ninth Annual Conference on Neural Information Processing Systems},
year={2026},
url={https://openreview.net/forum?id=4OsgYD7em5}
}

%%%%%%%%%%%%%%%%%%%%%%%%%%%%%%%%%%%%%%%%%%%%%%%%%%%%%%%%%%%%

\newpage
\appendix

\part*{\Large Appendix}
\addcontentsline{toc}{part}{Appendix}

\etocsettocstyle{\section*{\contentsname}}{}
\setcounter{tocdepth}{2} 
\localtableofcontents

\newpage

\section{Additional related work}\label{sec:additional-related}

A line of work studies when and why compositional generalization emerges. \citet{redhardt2025scaling} show that, with sufficient coverage, scaling data and model size can induce generalization to unseen task compositions, and provide theoretical results for modular task families. \citet{ahuja2025provablelengthcompositionalgeneralization} give guarantees for length and compositional generalization under sufficient training diversity. In controlled synthetic settings, \citet{ramesh2024compositional} show that autoregressive transformers can generalize to many unseen compositions, while \citet{chang2026characterizing} characterize the limits of pattern matching on compositional task structures and show that coverage alone is not always sufficient. In symbolic reasoning, \citet{ye2025how} study multi-hop generalization and find that atomic supervision mainly accelerates learning rather than determining it. However, these analyses are developed largely in supervised or tightly controlled settings, and do not directly address how post-training changes compositional reasoning in modern language models. 

\section{Omitted Proofs}\label{sec:apx_proofs}

In this section, we provide omitted proofs and details from Section~\ref{sec:theory} of the paper.

\subsection{Omitted Proofs from Section~\ref{sec:de-to-com-hard}}\label{sec:apx_b1}

The key challenge in transferring from decomposed skills to composed tasks is that solving a compound task requires the trained policy to remain correct over a serialized rollout of the task dependency graph. Even if each individual skill invocation is easy under the atomic training distribution, errors can accumulate over the reasoning trace, and the compound rollout may visit intermediate states that are rare under atomic supervision.

\paragraph{Formal rollout model.}
Fix a horizon $T$ and let $\rho_T$ be a distribution over compound task instances whose dependency graphs have $|V_\tau|=T$. A task instance $\tau\sim\rho_T$ is random, while $T$ is fixed throughout this subsection. As in Section~\ref{sec:compositionality-framework}, write
\[
G_\tau=(V_\tau,E_\tau)
\]
for the dependency graph of $\tau$. Each node $v\in V_\tau$ is labeled by a skill type $\kappa(v)$ and local information $\ell_v$. Its input and correct output at a rollout state are defined below.

We write
\[
\mathrm{pred}(v)=\{u\in V_\tau:(u,v)\in E_\tau\}
\]
for the predecessors of $v$, and assume a unique sink node $v_{\mathrm{sink}}$ whose output is the final task output.

Let $\mathcal S$ be a countable rollout-state space. A state $s\in\mathcal S$ contains the task instance $\tau$, the completed-node set
\[
P(s)\subseteq V_\tau,
\]
and the intermediate outputs produced so far, written as $\{z_u(s):u\in P(s)\}$. Thus the state contains all task-relevant context needed to determine which skill invocations are available next and what their correct outputs are. The nodes that are ready to execute at state $s$ are
\[
R(s)=\{v\in V_\tau\setminus P(s):\mathrm{pred}(v)\subseteq P(s)\}.
\]
For each ready node $v\in R(s)$, its input is formed from the task-local information and predecessor outputs by
\[
x_v(s)
=
\psi_v\!\left(\ell_v,\left(z_u(s)\right)_{u\in\mathrm{pred}(v)}\right)
\in \mathcal X_{\kappa(v)},
\]
and its correct output is
\[
z^*_v(s)=f_{\kappa(v)}(x_v(s)).
\]

\paragraph{Decision context.}
A correct next step need not be unique, since a branch--merge graph may admit multiple valid topological orderings. Moreover, whether a step is correct does not depend on the name of the node being executed, but only on the skill type invoked and the input it receives. Therefore, for each state, we can define a decision context
\begin{align*}
    Q(s) = \{ (\kappa (v), x_v(s)) : v \in R(s) \},
\end{align*}
which is the set of skill-input pairs available at state $s$. 
We then define the set of valid next invocation-output pairs by

\[
\mathcal I^*(s)
=
\{((\kappa, x), f_{\kappa}(x)): (\kappa, x)\in Q(s)\}.
\]
By construction, $\mathcal I^*(s)$ is determined by $Q(s)$ only. Therefore, we write $\mathcal I^*(c)$ for a decision context $c$. Since $\mathcal S$ is countable, the set of decision contexts $\{Q(s) : s \in \mathcal S\}$ is also countable.

A rollout policy $\hat\pi$ maps a state $s$ to a parsed invocation-output pair, consisting
of the skill invocation it performs, identified by its skill type and input rather than by a node name,
together with the output it produces. When $\hat{\pi}(s) = ((\kappa, x), z) \in \mathcal I^*(s)$, the rollout completes a ready node whose skill type is is $\kappa$ and input is $x$, and records the correct output $z = f_{\kappa}(x)$. If several ready nodes have the same skill type and input, a fixed deterministic tie-breaking rule determines which one is completed, in accordance with our empirical deterministic tie-breaking rule in Appendix~\ref{app:data-generation}. Note that when simultaneously ready nodes induce at least two distinct skill-input pairs, $Q(s)$ is non-singleton. For stochastic policies, all probabilities and expectations below are also over the policy randomness. We assume that, conditioned on the current rollout state $s$, the policy's current randomization is independent of the previous rollout history. Equivalently, any policy-internal memory or persistent random seed that can affect future outputs is included as part of the rollout state. Under this assumption, the stepwise error probability, 
\begin{align*}
    e_{\hat{\pi}}(s) = \Pr_{\hat{\pi}}[\hat{\pi}(s) \notin \mathcal I^*(s)],
\end{align*}
is a well-defined function of state $s$. For a deterministic policy, the error probability is either 0 or 1, so $e_{\hat{\pi}}(s) = \mathbbm{1}\{\hat\pi(s)\notin\mathcal I^*(s)\}$.

\paragraph{Context Invariance.} We assume that the stepwise error probability depends on the state only through the decision context, i.e.,
\begin{align*}
    Q(s) = Q(s') \Rightarrow e_{\hat{\pi}}(s) = e_{\hat{\pi}}(s'),
\end{align*} 
and we write $e_{\hat{\pi}}(c)$ for the stepwise error probability of a decision context $c$. 
This is because our transfer bound compares the policy on rollout distribution with the policy on atomic tasks, and this hypothesis is required to connect the two. By construction, the valid next-step set depends on the state only through its decision context: $Q(s)=Q(s')$ implies $\mathcal I^*(s)= \mathcal I^*(s')$. Thus, context invariance imposes no additional assumption on task correctness; it assumes only that the policy assigns the same total probability to invalid outputs at states with the same decision context. This assumption removes possible additional errors caused by embedding an invocation in a compound prompt, so the resulting transfer guarantee is optimistic. 

\paragraph{Failure and mistake.} Matching the main text, we write $s_t$ for the rollout state at timestep $t$; although $s_t$ is induced by the sampled task $\tau$, we do not attach an additional $\tau$ superscript. The rollout starts from the initial state $s_1$, where $P(s_1)=\emptyset$, and then updates the completed-node set and intermediate outputs according to the parsed pair emitted by $\hat\pi$. When $\hat\pi(s_t)\in\mathcal I^*(s_t)$, the rollout completes one ready node with its correct output; otherwise, the next state may be arbitrary, but the step is counted as a mistake.

Let $F$ denote the final-failure indicator:
\[
F
=
\mathbbm{1}\{\text{the rollout does not produce the correct sink-node output }z_{v_{\mathrm{sink}}}\}.
\]
Equivalently, if the terminal rollout state stores the outputs produced by the serialized rollout, missing or incorrect sink-node output is counted as failure.

A mistake at step $t$ is the event
\[
\hat\pi(s_t)\notin\mathcal I^*(s_t).
\]
Define the indicator of the first mistake made at step $t$ by
\[
M_t=
\mathbbm{1}\!
\left\{
\hat\pi(s_i)\in\mathcal I^*(s_i)\ \text{for all } i<t,
\quad
\hat\pi(s_t)\notin\mathcal I^*(s_t)
\right\}.
\]
If no mistake occurs for $t=1,\ldots,T$, then the rollout completes all nodes in a valid topological order and every completed node has the correct output. In particular, the sink-node output is $z_{v_{\mathrm{sink}}}$, so $F=0$. Hence $F=1$ implies that $M_t=1$ for at least one $t$.

Let $q$ be the distribution over decision contexts induced by atomic task training data. Sample an atomic-task state $s$ and record $Q(s)$. Define 
\[
\epsilon_{\mathrm{atom}}(\hat\pi;q)
=
\mathbb E_{c\sim q}
\left[ e_{\hat{\pi}}(c)\right] = \sum_c q(c) e_{\hat{\pi}}(c).
\]

When $q$ is clear from context, we write $\epsilon_{\mathrm{atom}}(\hat\pi)$. An atomic task instance has a single-node dependency graph, so its decision context is a singleton $\{(\kappa,x)\}$, and $q$ is supported on singleton contexts. A compound rollout context can therefore be covered by $q$ only if it is the same singleton and has positive $q$-mass. This singleton condition holds structurally along Level I and Level II paths, but may fail in Level III when several distinct invocations are simultaneously available. We compare atomic and compound behavior at the level of decision contexts because this captures their possible overlap; it does not assume that every compound decision context has an atomic analogue.

We demonstrate the notation above with a concrete example.
\begin{example}[BST construction]
\label{ex:theory_notation}
Consider the following prompt:
\begin{quote}
\texttt{
You are given an initially empty binary search tree.
Execute the following operations in order: \\
1. insert 5 \\
2. insert 2 \\
3. insert 8 \\
4. delete 2 \\
5. insert 3 \\
Use [] for an empty child and [v,L,R] for a nonempty node with value v,
left subtree L, and right subtree R. Return only the final tree.
}
\end{quote}
This prompt is a task instance $\tau$ with horizon $T=5$. Its dependency graph is a path $v_1\to v_2\to v_3\to v_4\to v_5$, where each node is one BST skill invocation. The local information $\ell_{v_i}$ specifies the instructed operation, such as \texttt{insert 5} for $v_1$ and \texttt{delete 2} for $v_4$. The sink node is $v_{\mathrm{sink}}=v_5$, and the correct sink-node output is
\[
z_{v_{\mathrm{sink}}}
=
[5,[3,[],[]],[8,[],[]]].
\]

Let $B_t$ denote the BST after the first $t-1$ invocations have been correctly executed. Along the correct rollout,
\[
B_1=[],\quad
B_2=[5,[],[]],\quad
B_3=[5,[2,[],[]],[]],\quad
B_4=[5,[2,[],[]],[8,[],[]]],
\]
\[
B_5=[5,[],[8,[],[]]],\quad
B_6=[5,[3,[],[]],[8,[],[]]].
\]
At timestep $t$, the rollout state can be represented as
\[
s_t=\left(\tau,\{v_1,\ldots,v_{t-1}\},\{z_{v_i}=B_{i+1}:i<t\}\right).
\]
Because the graph is a path, the ready-node set is a singleton at every nonterminal correct state: $R(s_t)=\{v_t\}$. For example, at step $4$,
\[
P(s_4)=\{v_1,v_2,v_3\},
\qquad
R(s_4)=\{v_4\}.
\]
so the decision context and the valid next invocation-output set are the singletons
\[
Q(s_4)
=
\left\{
\left(\texttt{delete},(B_4,2)\right)
\right\},
\qquad
\mathcal I^*(s_4)
=
\left\{
\left(\left(\texttt{delete},(B_4,2)\right),B_5\right)
\right\}.
\]
If the model invokes the available skill-input pair but returns an incorrect intermediate BST, or proposes a skill-input pair not in $Q(s_4)$, such as insert 3 before delete 2 is performed, then its invocation-output pair does not belong to $\mathcal I^*(s_4)$. 

The atomic task instance ``given the tree $B_4$, delete $2$'' has the same decision context $Q(s_4)$, even though its dependency graph has one node while that of $\tau$ has five. Decision contexts are exactly the level of description at which an atomic training example and a step of a compound rollout can coincide.
\end{example}

Lemma~\ref{lem:main-coverage} shows that generalizing from atomic to compound tasks depends on both the atomic error on $q$ and the mismatch between $q$ and the decision contexts visited along mistake-free compound rollouts.
\begin{lemmaformal}
\label{thm:sum_of_coverage}
Assume $\hat{\pi}$ is context-invariant. For each step $t$, let $\mu_t$ be the conditional distribution of the decision context $Q(s_t)$ given no mistakes before step $t$:
\[
\mu_t(A)
=
\Pr[Q(s_t)\in A\mid M_1=\cdots=M_{t-1}=0]
\qquad \text{for } \text{a set $A$ of decision contexts}.
\]
Assume that {$q(c)=0\Rightarrow \mu_t(c)=0$} for every $t \in [T]$ and every decision context $c$, i.e., any decision context that can occur at timestep $t$ of a correct compound rollout has nonzero probability under the atomic training-state distribution $q$. Define the cumulative coverage factor
\[
C_T(q)
=
\sup_{c:q(c)>0}\frac{\sum_{t=1}^T} \mu_t(c){q(c)}.
\]
Assume $C_T(q) < \infty$. Then, for $\tau\sim\rho_T$, the probability of final failure satisfies
\[
\Pr[F=1]
\le
 C_T(q) \,\epsilon_{\mathrm{atom}}(\hat\pi;q).
\]
This bound is tight in the worst case, as shown by Proposition~\ref{prop:no-horizon-free-transfer}.
\end{lemmaformal}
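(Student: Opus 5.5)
The plan is a union bound over the first-mistake events, followed by a change of measure from the rollout context distributions $\mu_t$ to the atomic distribution $q$.

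\textbf{Step 1: reduce failure to first mistakes.} As noted just before the statement, $F=1$ implies $M_t=1$ for some $t\in[T]$. The events $\{M_t=1\}$ are disjoint, so
\[
\Pr[F=1]\le \sum_{t=1}^T \Pr[M_t=1].
\]

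\textbf{Step 2: bound each first-mistake probability.} Let $N_{t}$ denote the event $M_1=\cdots=M_{t-1}=0$, i.e., no mistake before step $t$. Then
\[
\Pr[M_t=1]=\Pr[N_t]\cdot \Pr[\hat\pi(s_t)\notin\mathcal I^*(s_t)\mid N_t]\le \Pr[\hat\pi(s_t)\notin\mathcal I^*(s_t)\mid N_t].
\]
To evaluate the right-hand side, condition further on the state $s_t$. This is the step needing the most care. The Markov assumption says the policy's randomization at step $t$ is independent of the history given $s_t$. Hence, conditional on $N_t$ and $s_t$, the error probability equals $e_{\hat\pi}(s_t)$, because $N_t$ is history-measurable. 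Context invariance then gives $e_{\hat\pi}(s_t)=e_{\hat\pi}(Q(s_t))$. Averaging over $s_t$ given $N_t$, and using that the countable state space makes all sums well defined, yields
\[
\Pr[M_t=1]\le \sum_c \mu_t(c)\,e_{\hat\pi}(c).
\]

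\textbf{Step 3: change of measure.} Summing over $t$ gives
\[
\Pr[F=1]\le \sum_c \Big(\sum_{t=1}^T\mu_t(c)\Big)e_{\hat\pi}(c).
\]
By the absolute-continuity assumption, contexts with $q(c)=0$ have $\mu_t(c)=0$ for every $t$ and contribute nothing. For every $c$ with $q(c)>0$, the definition of the coverage factor gives $\sum_t\mu_t(c)\le C_T(q)\,q(c)$. Since $e_{\hat\pi}\ge 0$, substituting this bound termwise gives
\[
\Pr[F=1]\le C_T(q)\sum_c q(c)e_{\hat\pi}(c)=C_T(q)\,\epsilon_{\mathrm{atom}}(\hat\pi;q).
\]

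\textbf{Tightness.} The tightness claim is delegated to Proposition~\ref{prop:no-horizon-free-transfer}. As a sanity check, I expect an equality case of the following kind. Take a deterministic path task whose $T$ contexts $c_1,\dots,c_T$ are all distinct, with $\mu_t=\delta_{c_t}$ and $q$ uniform on $\{c_1,\dots,c_T\}$, so that $C_T(q)=T$. Let the policy err only at $c_T$. Then $\epsilon_{\mathrm{atom}}=1/T$ and $\Pr[F=1]=1=C_T(q)\,\epsilon_{\mathrm{atom}}$, so the bound holds with equality.
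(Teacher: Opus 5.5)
Your proposal is correct and follows essentially the same route as the paper. It uses a union bound over first-mistake events with $\Pr[M_t=1]\le\Pr[\hat\pi(s_t)\notin\mathcal I^*(s_t)\mid M_1=\cdots=M_{t-1}=0]=\sum_c\mu_t(c)e_{\hat\pi}(c)$, then a change of measure to $q$ through the coverage factor, and delegates tightness to the same single-skill insertion chain with $q$ uniform on $c_1,\dots,c_T$. Your explicit appeal to the Markov assumption is a slightly more careful version of a step the paper states in one line, namely that conditioning on the history event $N_t$ leaves the stepwise error equal to $e_{\hat\pi}(s_t)$.
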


\begin{proof}
For each $t$, define the conditional step-error probability
\[
\delta_t(\hat\pi)
=
\Pr[\hat\pi(s_t)\notin\mathcal I^*(s_t)
\mid M_1=\cdots=M_{t-1}=0].
\]
As argued above, if the final output is incorrect, then at least one first mistake must have occurred. Hence
\[
\Pr[F=1]
\le
\sum_{t=1}^{T}\Pr[M_t=1]
=
\sum_{t=1}^{T}\Pr[M_1=\cdots=M_{t-1}=0] \, \delta_t(\hat\pi)
\le
\sum_{t=1}^{T}\delta_t(\hat\pi).
\]

Fix $t$. Conditioned on the rollout state $s_t=s$, the policy makes a mistake with probability $e_{\hat\pi}(s)$, which by context invariance equals $e_{\hat\pi}(Q(s))$. Conditioning instead on the decision context and using that the set of decision contexts is countable,
\[
\delta_t(\hat\pi)
=
\sum_{c}\mu_t(c)\,e_{\hat\pi}(c).
\]
Summing over $t$ and using the coverage assumption, which allows us to restrict the sum to contexts with $q(c)>0$,
\begin{align*}
\sum_{t=1}^{T}\delta_t(\hat\pi)
& =
\sum_{c}\left(\sum_{t=1}^{T}\mu_t(c)\right)e_{\hat\pi}(c)
=
\sum_{c:q(c)>0}
\frac{\sum_{t=1}^{T}\mu_t(c)}{q(c)}\,q(c)\,e_{\hat\pi}(c) \\
& \le
C_T(q)\sum_{c}q(c)\,e_{\hat\pi}(c)
=
C_T(q)\,\epsilon_{\mathrm{atom}}(\hat\pi;q).
\end{align*}
Combining the preceding inequalities proves the claim.

\end{proof}

Lemma~\ref{lem:main-coverage} shows that transferring from atomic supervision to compound tasks depends on two quantities: the atomic error $\epsilon_{\mathrm{atom}}(\hat\pi;q)$ and the coverage factor $C_T(q)$, which measure the mismatch between the atomic training-state distribution and the rollout state distribution. This guarantee is not horizon-free: the horizon dependence is absorbed into the coverage factors $C_T(q)$.

Indeed, suppose $\Pr[M_1 = \dots = M_{t-1} = 0]>0$ for every $t\in[T]$, so that each $\mu_t$ is a probability distribution and $\sum_{c}\sum_{t=1}^T \mu_t(c)=T$. Restricting to contexts with $q(c)>0$, which the coverage assumption permits,
\[
T
=
\sum_{c:q(c)>0}
\frac{\sum_{t=1}^{T}\mu_t(c)}{q(c)}\,q(c)
\le
C_T(q)\sum_{c:q(c)>0}q(c)
=
C_T(q).
\]
So $C_T(q)\ge T$ always, with equality exactly when atomic supervision matches the visitation frequencies of the compound rollout, i.e.\ $\sum_{t}\mu_t=Tq$. In particular, even perfectly matched coverage still pays a factor $T$, recovering the $T\epsilon_{\mathrm{atom}}$-scaling; poorer coverage further amplifies failure risk. Note that the coverage assumption assumes atomic supervision is capable of producing the contexts a compound rollout visits. This holds for path-structured graphs, where $|R(s_t)|=1$ along a correct rollout so every visited context is a singleton, but not in general; we return to this point in the discussion of Level III below.

The following proposition shows that this horizon dependence is unavoidable even in a path-structured, unique-valid-invocation setting.
\begin{prop}[No horizon-free transfer bound]
\label{prop:no-horizon-free-transfer}
For every $T\ge 1$, there exist a compound task distribution $\rho_T$, a countable state space $\mathcal S$, an atomic training distribution $q$ over decision contexts, and a context-invariant policy $\hat\pi$ such that each non-terminal state on the mistake-free rollout has a singleton valid next invocation-output set, yet
\[
\epsilon_{\mathrm{atom}}(\hat\pi;q)=\frac1T,
\qquad
\Pr[F=1]=1.
\]
Consequently, no bound of the form
\[
\Pr[F=1]\le g\!\left(\epsilon_{\mathrm{atom}}(\hat\pi;q)\right)
\]
with $g(\epsilon)\to0$ as $\epsilon\to0$ can hold uniformly over all horizons $T$ and all compound task distributions.
\end{prop}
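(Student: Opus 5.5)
The plan is an explicit construction for each fixed $T\ge1$. Take a Level~I path task: $\rho_T$ is a point mass on a single task $\tau$ whose dependency graph is a path $v_1\to\cdots\to v_T$, all nodes with the same skill $k$. Arrange the local information and input maps so that the correct states along the mistake-free rollout have $T$ pairwise distinct singleton decision contexts $c_1,\dots,c_T$, with $Q(s_t)=\{c_t\}$. For example, let $f_k(x)=x+1$ on $\mathbb{N}$, let $x_{v_1}=0$, and let $x_{v_t}=z_{v_{t-1}}$, so that $c_t=\{(k,t-1)\}$. The state space $\mathcal S$ consists of the triples (task, completed prefix, recorded outputs), including arbitrary wrong outputs, which keeps it countable.

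The atomic distribution is $q$ uniform on $\{c_1,\dots,c_T\}$. This gives $q(c)>0$ on every context a mistake-free rollout can visit, so the coverage hypothesis of Lemma~\ref{lem:main-coverage} holds.

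Define a deterministic policy by its behavior on contexts, which makes it context-invariant automatically. On every context other than $c_T$ it outputs the correct pair $((k,x),f_k(x))$. On $c_T$ it outputs $((k,T-1),T+1)\notin\mathcal I^*(c_T)$. At any state the policy depends only on $Q(s)$, so $e_{\hat\pi}(c)=\mathbbm 1\{c=c_T\}$. Then $\epsilon_{\mathrm{atom}}(\hat\pi;q)=\sum_c q(c)e_{\hat\pi}(c)=1/T$.

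On the compound task, the rollout is correct at steps $1,\dots,T-1$ and therefore deterministically reaches $s_T$ with $Q(s_T)=\{c_T\}$. There it records a wrong sink output, so $\Pr[F=1]=1$. Each nonterminal mistake-free state has the singleton valid set $\mathcal I^*(s_t)$, as the statement requires. One could alternatively make the error on a context appearing at every step, but the single-bad-context version is cleaner.

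For the "consequently" part, suppose a bound $\Pr[F=1]\le g(\epsilon_{\mathrm{atom}})$ held uniformly with $g(\epsilon)\to0$. Choose $\epsilon_0$ with $g(\epsilon)<1$ for $\epsilon\le\epsilon_0$, and take $T\ge1/\epsilon_0$. The construction then gives $1\le g(1/T)<1$, a contradiction. If $g$ is only defined as a limit and not as monotone, I would phrase this as: $g(1/T)\ge1$ for all $T$, so $g(1/T)\not\to0$.

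The main obstacle is bookkeeping rather than mathematics. Context invariance must hold on all states, including off-path states reached after a mistake; defining $\hat\pi$ as a function of $Q(s)$ alone handles this. The contexts $c_t$ must also be genuinely distinct, so that $q$ can place only mass $1/T$ on the bad one; the increment example guarantees this. Finally, $T=1$ is a degenerate but valid case, with $\epsilon_{\mathrm{atom}}=1$ and $F=1$.
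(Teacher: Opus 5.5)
Your construction is correct and matches the paper's proof essentially step for step. Both use a point-mass single-skill chain with $T$ pairwise distinct singleton contexts $c_1,\dots,c_T$; the paper inserts $1,\dots,T$ into a list where you use an increment skill. Both take $q$ uniform on these contexts and a deterministic policy, defined on decision contexts, that errs only on $c_T$. Your explicit derivation of the ``consequently'' clause, via $g(1/T)\ge 1$ for every $T$, is a small addition that the paper leaves implicit.
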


\begin{proof}
Let $\rho_T$ be the point mass on the following single-skill chain: starting from the empty list, insert the values $1,2,\ldots,T$ in this order. Its dependency graph is a path of length $T$, so the ready-node set is a singleton at every nonterminal correct state, and the correct output of each invocation is unique. Along the correct rollout, the decision context at step $t$ is the singleton
\[
c_t=\left\{\left(\texttt{insert},\left([1,\ldots,t-1],t\right)\right)\right\},
\]
and $c_1,\ldots,c_T$ are pairwise distinct because their inputs are. Each $c_t$ is also the decision context of an atomic task instance, namely the single invocation \texttt{insert} $t$ applied to the list $[1,\ldots,t-1]$, so the uniform distribution $q$ over $\{c_1,\ldots,c_T\}$ is realizable as an atomic training distribution.

Let $\hat\pi$ be the deterministic policy, defined on decision contexts and hence context-invariant, that returns the valid pair on $c_1,\ldots,c_{T-1}$ and a pair outside $\mathcal I^*(c_T)$ on $c_T$. As the rollout model permits, let a mistake send the rollout to a state from which the correct sink-node output is never produced. Then $\hat\pi$ errs on exactly one of the $T$ contexts in the support of $q$, so
\[
\epsilon_{\mathrm{atom}}(\hat\pi;q)=\frac1T.
\]
However, the compound rollout makes no mistake at steps $1,\ldots,T-1$ and therefore visits $c_1,c_2,\ldots,c_T$ in order, necessarily making a mistake at step $T$. Thus, the rollout fails to produce the correct sink-node output, so
\[
\Pr[F=1]=1.
\]
\end{proof}

In this construction, $\mu_t$ is the point mass on $c_t$, so $\sum_{t=1}^{T}\mu_t(c_j)=1$ for any $j \in [T]$, while $q(c_j)=1/T$; hence $C_T(q)=T$ and the bound of Lemma~\ref{lem:main-coverage} evaluates to $T\cdot\frac1T=1=\Pr[F=1]$. Thus Lemma~\ref{lem:main-coverage} is tight in the worst case.

\paragraph{Connection with the compositionality levels.} Lemma~\ref{lem:main-coverage} explains how the difficulty of composing atomic tasks changes across our three levels of compositionality.
\begin{itemize}[leftmargin=*,topsep=2pt, itemsep=2pt, parsep=1pt]
    \item \textbf{Level I (Single-skill chain)}: In this case, $G_{\tau}$ is a path, and all nodes share the same skill type, so the main challenge is the rollout horizon $T=|V_\tau|$ (depth generalization). Even without skill switches or merges, a length-$T$ rollout incurs $T$ context visits. The cumulative factor counts all of these visits, including repeated visits to the same context, and therefore satisfies $C_T(q) \ge T$ under the stated conditions, with equality only under perfectly matched coverage. For the natural uniform mixture over visited contexts, this recovers the familiar $T\epsilon_{\mathrm{atom}}$-type scaling.

    \item \textbf{Level II (Multi-skill chain)}: In this case, the skill invoked at each node $v\in V_\tau$, namely $\kappa(v)$, varies across nodes. The model must therefore not only remain correct over a long horizon, but also execute different skills in sequence and propagate intermediate outputs across skill switches. Even if each skill is well covered in isolation by atomic supervision, the compound rollout distribution near a skill switch may assign mass to contexts in which one skill is applied to an input produced by a different skill, and such cross-skill contexts may receive little mass under the atomic training distribution $q$, increasing $C_T(q)$. This matches our experimental results in Figure~\ref{fig:skill-switch}: as the number of skill switches increases, the task becomes more difficult for the model.

    \item \textbf{Level III (Branch-merge graph)}: 
    In this case, $G_\tau$ is a branch--merge graph, and two effects arise that have no counterpart in Levels I and II. First, whenever simultaneously ready branches induce at least two distinct skill-input pairs, $Q(s_t)$ contains more than one invocation. The model must also choose which of them to execute next, a decision with no atomic analogue. Since an atomic task instance has a single-node dependency graph, its decision context is a singleton, so every such multi-invocation context lies outside the support of any atomic training distribution $q$. The coverage assumption of Lemma~\ref{lem:main-coverage} then fails unless the training distribution is enlarged beyond single-node atomic tasks to include multi-invocation choice contexts. Second, a merge invocation is a single skill applied to a tuple of outputs from different branches, and so is atomically realizable, but the contexts that matter are those in which the tuple was produced by two separately executed branches; if atomic supervision covers merge invocations only on tuples from short or isolated branches, those contexts receive little mass and $C_T(q)$ is large. Both effects are consistent with the collapse of RL-Atomic on Level III compound tasks.
\end{itemize}

\subsection{Omitted Proofs from Section~\ref{sec:co-to-de-easy}}\label{sec:apx_b2}

The previous subsection shows that atomic-to-compound transfer is difficult because errors can accumulate over a multi-step rollout. The reverse direction does not have the same horizon dependence: an error at the first step of a compound rollout is already an error on a single skill invocation.

Let $\rho_T$ denote a distribution over compound task instances with horizon $T$. For $\tau\sim\rho_T$, let $s_1$ be its initial rollout state. Define the compound-task error by
\[
\epsilon_{\mathrm{comp}}(\hat\pi;\rho_T)
=
\mathbb E_{\tau\sim\rho_T,\hat\pi}[F],
\]
and define the first-step  error induced by $\rho_T$ as
\[
\epsilon_{\mathrm{atom}}^{(1)}(\hat\pi;\rho_T)
=
\Pr_{\tau\sim\rho_T,\hat\pi}
\left[
\hat\pi(s_1)\notin\mathcal I^*(s_1)
\right].
\]
To interpret this first-step error as an error with atomic context, we assume that every compound task in the support of $\rho_T$ has a unique source node. Since the completed-node set $P(s_1) = \emptyset$, the ready node set $R(s_1)$ contains exactly one node, and the decision context $Q(s_1)$ is a singleton decision context.

\begin{prop}[First-step compound-to-atomic transfer]
\label{lem:compound-to-atomic-step-one}

Assume that a first-step mistake necessarily prevents successful completion of the compound task, i.e., for every task instance $\tau$ and every realization of the policy randomness,
\[
\hat\pi(s_1)\notin\mathcal I^*(s_1)
\quad\Longrightarrow\quad
F=1.
\]
Then
\[
\epsilon_{\mathrm{atom}}^{(1)}(\hat\pi;\rho_T)
\le
\epsilon_{\mathrm{comp}}(\hat\pi;\rho_T).
\]
\end{prop}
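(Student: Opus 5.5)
The plan is a direct event-inclusion argument on the joint probability space of the task draw $\tau\sim\rho_T$ and the policy randomness. Let $E_1$ be the event $\{\hat\pi(s_1)\notin\mathcal I^*(s_1)\}$, so that $\epsilon_{\mathrm{atom}}^{(1)}(\hat\pi;\rho_T)=\Pr[E_1]$ by definition. Since $F$ is an indicator, $\epsilon_{\mathrm{comp}}(\hat\pi;\rho_T)=\mathbb E[F]=\Pr[F=1]$.

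The single substantive step is to invoke the non-recoverability hypothesis. It states that, pointwise for every task instance and every realization of the policy randomness, $E_1$ implies $F=1$. This gives the event inclusion $E_1\subseteq\{F=1\}$, or equivalently the pointwise inequality $\mathbbm{1}_{E_1}\le F$. Taking expectations over $\tau\sim\rho_T$ and the policy randomness, and using monotonicity of probability, yields
\[
\epsilon_{\mathrm{atom}}^{(1)}(\hat\pi;\rho_T)=\Pr[E_1]\le\Pr[F=1]=\epsilon_{\mathrm{comp}}(\hat\pi;\rho_T).
\]

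There is no real obstacle. The only point needing care is measure-theoretic: both quantities must be defined on the same probability space. The first-step error uses only the randomness of the first policy call, while $F$ depends on the whole rollout. I would handle this by viewing $E_1$ as an event in the full rollout space, which is countable because $\mathcal S$ is countable. By the assumption that the policy's randomization at each step depends on history only through the current state, the marginal probability of $E_1$ in the full rollout space equals the first-step error as defined.

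I would also note that the unique-source assumption is not needed for the inequality itself. It only serves to interpret $Q(s_1)$ as a singleton, atomic-type context, and it becomes relevant later when this result is combined with a coverage factor $C_1(q,\rho_T)$ to pass to a general atomic test distribution $q$.
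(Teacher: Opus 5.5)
Your proposal is correct and matches the paper's proof. The paper likewise uses the non-recoverability hypothesis to get the pointwise inequality $M_1 \le F$, where $M_1$ is the first-step mistake indicator, and then takes expectations over $\tau\sim\rho_T$ and the policy randomness; your measure-theoretic remark is harmless but unnecessary, since $\epsilon_{\mathrm{atom}}^{(1)}(\hat\pi;\rho_T)$ is already defined on that same joint space.
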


\begin{proof}
Recall that $M_1=\mathbbm{1}
\left\{
\hat\pi(s_1)\notin\mathcal I^*(s_1)
\right\}$ is the indicator of whether the model makes mistake on the first step. By the non-recoverable mistake assumption, $M_1 \le F$ for every task instance and every realization of the policy randomness. Taking the expectation gives $\mathbb E[M_1] \le \mathbb E[F]$, and notice that the left-hand side is $\epsilon_{\mathrm{atom}}^{(1)}(\hat\pi;\rho_T)$ and the right-hand side is $\epsilon_{\mathrm{comp}}(\hat\pi;\rho_T)$.
\end{proof}

Proposition~\ref{lem:compound-to-atomic-step-one} compares two events within the same compound rollout, so no coverage assumption or context invariance assumption is required here. 

\paragraph{Transfer to a separate atomic distribution.} Even when an atomic task and the first step of a compound task require the same skill on the same input, their full rollout states differ because each state records the entire task instance. We therefore compare them through their decision contexts.

Define the first-step decision-context distribution induced by compound tasks as 
\[
    q_1^{\rho_T}(c)
    =
    \Pr_{\tau\sim\rho_T}
    \left[
    Q(s_1)=c
    \right].
\]
Because every task has a unique source node, $q_1^{\rho_T}$ is supported on singleton decision context. 

Assume that $\hat{\pi}$ is context invariant. Then the first-step error can be written as 
\begin{align*}
\epsilon_{\mathrm{atom}}^{(1)}(\hat\pi;\rho_T)
&=
\mathbb E_{\tau\sim\rho_T}
\left[
e_{\hat\pi}(s_1)
\right] \\
&=
\mathbb E_{\tau\sim\rho_T}
\left[
e_{\hat\pi}(Q(s_1))
\right] \\
&=
\sum_c q_1^{\rho_T}(c)e_{\hat\pi}(c).
\end{align*}

Now let $q$ be a target atomic distribution over decision contexts. Assume that $q$ is covered by the compound first-step distribution:
\[
    q_1^{\rho_T}(c)=0 \quad \Longrightarrow \quad q(c)=0.
\]

Define the first-step coverage factor as 
\[
C_1(q,\rho_T)
=
\sup_{c:q_1^{\rho_T}(c)>0}
\frac{q(c)}{q_1^{\rho_T}(c)}.
\]
Then
\[
\epsilon_{\mathrm{atom}}(\hat\pi;q)
\le
C_1(q,\rho_T)\,
\epsilon_{\mathrm{comp}}(\hat\pi;\rho_T).
\]
Indeed, using the definition of atomic error,
\begin{align*}
\epsilon_{\mathrm{atom}}(\hat\pi;q)
&=
\sum_{c}q(c)e_{\hat\pi}(c) \\
&=
\sum_{c:q_1^{\rho_T}(c)>0}
\frac{q(c)}{q_1^{\rho_T}(c)}q_1^{\rho_T}(c)e_{\hat\pi}(c) \\
&\le
C_1(q,\rho_T)
\sum_{c}q_1^{\rho_T}(c)e_{\hat\pi}(c) \\
&=
C_1(q,\rho_T)\,
\epsilon_{\mathrm{atom}}^{(1)}(\hat\pi;\rho_T) \\
&\le
C_1(q,\rho_T)\,
\epsilon_{\mathrm{comp}}(\hat\pi;\rho_T),
\end{align*}
where the final inequality follows from
Proposition~\ref{lem:compound-to-atomic-step-one}.

When $q=q_1^{\rho_T}$, we have $C_1(q,\rho_T)=1$, and therefore
\[
\epsilon_{\mathrm{atom}}(\hat\pi;q_1^{\rho_T})
\le
\epsilon_{\mathrm{comp}}(\hat\pi;\rho_T).
\]
Unlike atomic-to-compound transfer, this bound has no horizon factor: it compares atomic performance with only the first decision context of the compound rollout, rather than accumulating errors over all $T$ steps.

\section{Experiment details}\label{app:experiment-details}

\subsection{Dataset tasks} 

\paragraph{Five data structure domains.}
We chose five common data structure domains of varying difficulty, taken from DSR-Bench \citep{he2026llmsreasonstructurallybenchmarking}: array, binary search tree, bloom filter, hashmap, and heap.

\begin{itemize}[leftmargin=*,topsep=2pt, itemsep=2pt, parsep=1pt]
    \item \textbf{Array.} An array is an ordered list of values indexed from left to right. The model inserts a value at an index or deletes an element, and outputs the final array as a list of integers.

    \item \textbf{Binary Search Tree (BST).} A BST is a binary tree where values in the left subtree are smaller than the node and values in the right subtree are larger. The tree is described through node relationships, and the output is the pre-order traversal of the final tree as a flattened list.

    \item \textbf{Bloom Filter.} We use a counting Bloom filter, which represents a set with an array of counters and several hash functions. Insert increments the counters at the hashed positions, while delete decrements them without going below zero. The output is the final count array.

    \item \textbf{Hashmap.} A hashmap stores key-value pairs in buckets using a hash function. Insert adds or updates a key-value pair, and delete removes a key if it exists. The output is a nested list of buckets, where each bucket contains its key-value pairs.

    \item \textbf{Heap.} We use an array-based min-heap, where each parent is less than or equal to its children. Insert adds a value and restores the heap property by sifting up, while delete removes the root and restores the heap by sifting down. The output is the final heap array.
\end{itemize}

The atomic skills are \textsc{insert} and \textsc{delete} for each domain, with deletion called \textsc{remove} for BSTs and hashmaps in the implementation. The compound task combines these skills: each example gives a sequence of insert and delete operations, and the model must apply them in order and return the final data structure state. 

The problem length is defined as the initial size of the input data structure, such as the number of elements in an array, nodes in a tree, or entries in a hashmap. For compound tasks, the problem length additionally also indicates the number of operations in the chain, since the model must update the data structure repeatedly before producing the final state.

\paragraph{Segment tree construction.}

\begin{figure*}[h]
    \centering
    \resizebox{\linewidth}{!}{%
\begin{tikzpicture}[
    x=1cm, y=1cm, >=Latex,
    panel/.style={draw=black!25, fill=black!3, rounded corners=4pt, line width=0.5pt},
    skill/.style={
        draw=black!60, rounded corners=6pt, line width=0.7pt,
        minimum width=1.70cm, minimum height=0.54cm,
        inner sep=2pt, align=center, font=\scriptsize\bfseries, fill=white
    },
    split/.style={skill, fill=yellow!35},
    leaf/.style={skill, fill=blue!18},
    merge/.style={skill, fill=teal!22},
    assemble/.style={skill, fill=orange!28},
    edge/.style={->, line width=0.85pt, draw=black!65},
    aux/.style={->, densely dotted, line width=0.65pt, draw=black!45},
    elab/.style={font=\tiny, inner sep=1pt, text=black!75},
    alab/.style={font=\tiny, inner sep=1pt, text=black!55},
    ltitle/.style={font=\small\bfseries, anchor=west},
    ltext/.style={
        font=\scriptsize,
        align=left,
        text width=3.7cm,
        anchor=north west,
        execute at begin node=\setlength{\baselineskip}{9.3pt}
    }
]

% crop the TikZ bounding box tightly
\path[use as bounding box] (0,0) rectangle (13.95,-6.85);

% ------------------------------------------------------------------
% Panels
% ------------------------------------------------------------------
\draw[panel] (0,0) rectangle (8.85,-6.85);
\draw[panel] (9.15,0) rectangle (13.95,-6.85);

% ------------------------------------------------------------------
% Titles
% ------------------------------------------------------------------
\node[font=\bfseries\small, align=center] at (4.425,-0.38)
    {Segment-tree construction {\scriptsize(example: $[2,-1,3]$)}};
\node[font=\bfseries\small] at (11.55,-0.42) {Skills};

% ------------------------------------------------------------------
% Main DAG (shifted upward)
% ------------------------------------------------------------------
\node[split]    (s0)  at (4.35,-1.15) {\texttt{SplitInterval}};
\node[split]    (s1)  at (2.40,-2.25) {\texttt{SplitInterval}};
\node[leaf]     (l2)  at (6.65,-2.25) {\texttt{MakeLeaf}};

\node[leaf]     (l0)  at (1.20,-3.45) {\texttt{MakeLeaf}};
\node[leaf]     (l1)  at (3.50,-3.45) {\texttt{MakeLeaf}};

\node[merge]    (m01) at (2.35,-4.50) {\texttt{MergeSummaries}};
\node[assemble] (a01) at (2.35,-5.55) {\texttt{AssembleNode}};

\node[assemble] (a2)  at (6.65,-3.50) {\texttt{AssembleNode}};

\node[merge]    (m02) at (5.30,-4.55) {\texttt{MergeSummaries}};
\node[assemble] (a02) at (5.30,-5.55) {\texttt{AssembleNode}};

% ------------------------------------------------------------------
% Main edges
% ------------------------------------------------------------------
\draw[edge] (s0) -- node[elab, above left] {$[0,1]$} (s1);
\draw[edge] (s0) -- node[elab, above right] {$[2,2], x_2=3$} (l2);

\draw[edge] (s1) -- node[elab, above left] {$[0,0], x_0=2$} (l0);
\draw[edge] (s1) -- node[elab, above right] {$[1,1], x_1=-1$} (l1);

\draw[edge] (l0) -- node[elab, left] {$s_{00}$} (m01);
\draw[edge] (l1) -- node[elab, right, xshift=2pt] {$s_{11}$} (m01);
\draw[edge] (m01) -- node[elab, left] {$s_{01}$} (a01);

\draw[edge] (l2) -- node[elab, right] {$s_{22}$} (a2);

\draw[edge] (a01.east) -- node[elab, right, xshift=2pt] {$n_{01}$} (m02.west);

\draw[edge] (a2) -- node[elab, right, pos=0.40] {$n_{22}$} (m02.north east);

\draw[edge] (m02) -- node[elab, right, xshift=2pt] {$s_{02}$} (a02);
\draw[edge] (a02.south) -- ++(0,-0.32) node[elab, below] {$n_{02}$};

% ------------------------------------------------------------------
% Dotted context edges (rerouted to avoid overlap)
% ------------------------------------------------------------------
\draw[aux] (s1.east)
    .. controls (4.10,-2.35) and (4.15,-4.95) ..
    node[alab, right, pos=0.60] {$[0,1]$}
    (a01.north east);

\draw[aux] (s0.east)
    .. controls (7.45,-1.15) and (7.70,-3.15) ..
    node[alab, right, pos=0.55, xshift=5pt] {$[2,2]$}
    (a2.east);

\draw[aux] (s0.east)
    .. controls (8.40,-1.25) and (8.35,-5.85) ..
    node[alab, right, pos=0.68, xshift=5pt] {$[0,2]$}
    (a02.east);

% ------------------------------------------------------------------
% Footer note
% ------------------------------------------------------------------
\node[font=\tiny, text=black!70, anchor=west] at (0.45,-6.58)
    {$s_{ij}$ = summary for interval $[i,j]$};
\node[font=\tiny, text=black!70, anchor=west] at (4.70,-6.58)
    {$n_{ij}$ = assembled node for interval $[i,j]$};

% ------------------------------------------------------------------
% Right panel: skill legend (slightly larger fonts, more even spacing)
% ------------------------------------------------------------------
\node[split, anchor=west] at (9.35,-1.02) {\texttt{SplitInterval}};
\node[ltext]  at (9.25,-1.32)
    {Input: interval $[l,r]$ with $l<r$\\
     Output: $[l,m]$ and $[m{+}1,r]$};

\node[leaf, anchor=west] at (9.35,-2.45) {\texttt{MakeLeaf}};
\node[ltext]  at (9.25,-2.75)
    {Input: singleton $[i,i]$ and value $x_i$\\
     Output: $s_{ii}=[x_i,x_i,x_i,x_i]$, \\
     where summary $s_{ij} = (\text{sum}, \text{max-prefix}, \text{max-suffix}, \text{max-subarray})$};

\node[merge, anchor=west] at (9.35,-4.5) {\texttt{MergeSummaries}};
\node[ltext]  at (9.25,-4.78)
    {Input: child summaries $s_L,s_R$\\
     Output: parent summary $s$};

\node[assemble, anchor=west] at (9.35,-5.85) {\texttt{AssembleNode}};
\node[ltext, text width=6cm]  at (9.25,-6.10)
    {Input: interval $[l,r]$, summary $s$, node(s)\\
     Output: node $n_{lr}$};
\end{tikzpicture}%
}
    \caption{An example of segment-tree construction as branch-merge graph.}
    \label{fig:segment-tree-full}
\end{figure*}

Given an integer array, the model must construct a segment tree over the full index range $[0,n-1]$. A segment tree is a binary tree that recursively partitions an array interval into smaller intervals. In our setting, every node corresponds to an inclusive interval $[l,r]$ and stores a four-field summary
\[
[\texttt{sum}, \texttt{best\_prefix}, \texttt{best\_suffix}, \texttt{best\_subarray}].
\]
Here, \texttt{sum} is the total sum of the interval, \texttt{best\_prefix} is the maximum sum of a non-empty prefix subarray, \texttt{best\_suffix} is the maximum sum of a non-empty suffix subarray, and \texttt{best\_subarray} is the maximum sum of any non-empty contiguous subarray inside the interval. Thus, the constructed tree represents both the recursive interval structure and the dynamic-programming summaries needed for maximum-subarray range queries.

We decompose segment-tree construction into four skills:
\begin{itemize}[leftmargin=*,topsep=2pt, itemsep=2pt, parsep=1pt]
    \item \texttt{SplitInterval}: Given an interval $[l,r]$ with $l<r$, compute $m=\lfloor(l+r)/2\rfloor$ and return the two child intervals $[l,m]$ and $[m+1,r]$.
    \item \texttt{MakeLeaf}: Given a single array value $x$, construct the leaf summary $[x,x,x,x]$, since the sum, best prefix, best suffix, and best subarray are all equal to the only element.
    \item \texttt{MergeSummaries}: Given left summary $[L_{\mathrm{sum}},L_{\mathrm{pref}},L_{\mathrm{suff}},L_{\mathrm{best}}]$ and right summary $[R_{\mathrm{sum}},R_{\mathrm{pref}},R_{\mathrm{suff}},R_{\mathrm{best}}]$, compute the parent summary:
    \[
    \begin{aligned}
    \texttt{sum} &= L_{\mathrm{sum}} + R_{\mathrm{sum}},\\
    \texttt{best\_prefix} &= \max(L_{\mathrm{pref}}, L_{\mathrm{sum}} + R_{\mathrm{pref}}),\\
    \texttt{best\_suffix} &= \max(R_{\mathrm{suff}}, R_{\mathrm{sum}} + L_{\mathrm{suff}}),\\
    \texttt{best\_subarray} &= \max(L_{\mathrm{best}}, R_{\mathrm{best}}, L_{\mathrm{suff}} + R_{\mathrm{pref}}).
    \end{aligned}
    \]
    \item \texttt{AssembleNode}: Package an interval, its summary, and its two children into the node representation $[\texttt{interval}, \texttt{summary}, \texttt{left}, \texttt{right}]$. Leaf children are represented as \texttt{None}.
\end{itemize}

The compound construction task applies these skills recursively. Starting from $[0,n-1]$, if $l=r$, the model applies \texttt{MakeLeaf} and then \texttt{AssembleNode}. If $l<r$, the model applies \texttt{SplitInterval}, independently constructs the left and right subtrees, applies \texttt{MergeSummaries} to combine their summaries, and finally applies \texttt{AssembleNode} to form the parent. Branching occurs at each non-leaf interval when it is split into two child intervals that are processed independently. Merging occurs after both child subtrees are complete, when their summaries are combined into the parent summary.

The final answer is a pre-order traversal of the constructed tree. Each visited node is represented as
\[
[[l,r], [\texttt{sum}, \texttt{best\_prefix}, \texttt{best\_suffix}, \texttt{best\_subarray}]].
\]

\subsection{Example prompt}

Below is an example array-compound task of length 6:

\begin{quote}
\small
\begin{verbatim}
An array supports the following operations (using 0-based indexing):
1. (insert, index=i, value=v) inserts value v at position i,
   shifting elements at i and beyond to the right.
2. (delete, index=i) removes the element currently at position i.
Start with the provided initial array and apply each operation in order.

Initial array: [94, 89, 79, 60, 41, 48]
Operations:
(delete, index=4)
(delete, index=4)
(insert, index=1, value=33)
(insert, index=4, value=26)
(delete, index=3)
(delete, index=0)

Q: What is the final array after applying all operations?
Your answer should be a list of integers, e.g. [1, 2, 3].

Approach the problem methodically. Ensure all conclusions are based on
precise calculations and logical deductions. Feel free to explore various
solution methods and cross-check results for consistency. Maintain dynamic
thinking and always verify each step of your reasoning. The last line of
your response should be of the following format: `Therefore, the final
answer is: $\boxed{ANSWER}$.' (without quotes), where ANSWER is just the
final number or expression that solves the problem. Think carefully and
break down the problem step by step.
\end{verbatim}
\end{quote}

We append the instruction ``Approach the problem methodically...''. This instruction follows \citet{guo2026g1}, which prompts the model to think carefully step by step before giving the final answer.

\subsection{Data generation}\label{app:data-generation}
All examples are synthetically and programmatically generated. This is possible because the behavior of each data structure can be implemented exactly in code, which lets us compute the ground-truth answer without human annotation. Each data structure has a prompt template describing its rules and output format (an example see above). We then populate the template with randomly generated values, using either integers (in the range 0--100) or strings over the English alphabet, depending on the task domain.

For multi-skill chain's compound tasks, each example starts from an initial data structure state and applies a chain of operations. Each operation is sampled as an insert with probability $0.7$ and a delete with probability $0.3$. We ensure that every deletion is valid: for example, array deletion chooses an existing index, BST and hashmap deletion choose existing keys or values, heap deletion is only applied to a non-empty heap, and counting Bloom filter deletion chooses an item that has previously been inserted. The answer is the final serialized state after applying all operations in order.

We also make the outputs deterministic and unique by fixing implementation details that could otherwise be ambiguous. For example, when a traversal has multiple possible next nodes, we visit neighbors in increasing value order; when heap deletion has equal children, we use a fixed tie-breaking rule. These constraints ensure that each prompt has a single well-defined ground-truth output.

\subsection{Output validation}\label{sec:output-validation}
We extract the model's final answer from the final \texttt{\textbackslash boxed\{\}} answer and compare it against the ground truth, following \citet{guo2026g1}. We use instruction-tuned models to reduce errors caused by poor instruction following rather than poor data-structure reasoning. To further reduce formatting sensitivity, our evaluator applies several formatting relaxations:
\begin{itemize}
    \item it extracts the last boxed answer, including boxed answers with nested braces;
    \item it accepts \texttt{\textbackslash boxed} answers written without braces;
    \item it falls back to answers following phrases such as ``the final answer is'';
    \item it strips common wrappers such as code fences and simple LaTeX text wrappers;
    \item it extracts the first balanced list, tuple, or dictionary from otherwise verbose text;
    \item it accepts both Python-style literals and JSON-style literals;
    \item it treats tuples and lists as equivalent for sequence outputs;
    \item for hashmaps, it allows key-value pairs within a bucket to appear in any order.
\end{itemize}

Before comparison, we also normalize the extracted output:
\begin{itemize}
    \item remove extra whitespace, newlines, dollar signs, backslashes, and simple LaTeX spacing commands;
    \item normalize \texttt{\textbackslash tfrac} and \texttt{\textbackslash dfrac} to \texttt{\textbackslash frac};
    \item remove \texttt{\textbackslash left}, \texttt{\textbackslash right}, and degree markers;
    \item normalize decimal forms such as \texttt{.5} to \texttt{0.5};
    \item normalize \texttt{null}/\texttt{None} and Boolean tokens;
    \item parse numeric answers as numbers, with a small tolerance for floating-point answers;
    \item recursively normalize nested sequence structures before equality checking.
\end{itemize}

An example is marked incorrect if the model produces a wrong final state, if no final answer can be extracted, or if the extracted answer cannot be parsed under these formatting relaxations. In preliminary experiments, we also tried Structured Output to allow more flexible answer formats while requiring JSON output, but this lowered performance. We therefore use flattened representations when possible, such as pre-order traversal for BST outputs, since a flat list is easier for models to produce reliably than a deeply nested structure. Future work could use an LLM judge to further reduce the effect of formatting constraints.

\subsection{Reinforcement learning (RL) post-training}

\paragraph{Training framework.}
We perform RL post-training with a rule-based outcome reward. The policy is initialized from an instruction-tuned model and trained with Group Relative Policy Optimization (GRPO) \citep{shao2024deepseekmathpushinglimitsmathematical}. We use \texttt{verl} \citep{sheng2024verl} as the RL training framework, which manages distributed policy updates, reference-policy log probabilities, advantage computation, and checkpointing. For generation during training, we use \texttt{vLLM} \citep{kwon2023vllm} as the rollout engine, which efficiently samples multiple responses from the current policy for each prompt.

For each prompt, the model samples multiple candidate responses. We extract the final boxed answer from each response and score it with the same deterministic verifier used for evaluation. A response receives reward $1$ if the extracted answer matches the ground truth after normalization, and reward $0$ otherwise. The scalar outcome reward is assigned to the final response token. We do not train a learned reward model; instead, the reward is computed directly from programmatic correctness.

\paragraph{Reward function.}
The reward function is a binary correctness signal aligned with the evaluation metric. Given a generated response, we first extract the final answer, prioritizing the final \texttt{\textbackslash boxed\{\}} expression. We then apply the same formatting relaxations and normalization rules used in evaluation (Section \ref{sec:output-validation}), such as stripping simple LaTeX wrappers, accepting JSON- or Python-style list literals, normalizing tuples to lists, and comparing parsed data-structure states rather than raw strings. The normalized prediction is compared against the programmatically generated ground truth. Correct outputs receive reward $1$ and incorrect or unparsable outputs receive reward $0$. This makes the RL signal deterministic and avoids using a learned reward model or an LLM judge during training.

\paragraph{Hyperparameters.}
Training uses a batch size of 128 prompts, with 5 rollouts per prompt for GRPO. The maximum prompt length is 2048 tokens. The maximum response length is 4096, 8192, and 16384 tokens for problem length ranges 5--10, 11--20, and 21--30, respectively. We train for 30 optimization steps with learning rate $5\times 10^{-6}$. We use KL regularization against the reference policy with coefficient $0.001$ and an entropy coefficient of $0.001$. The PPO mini-batch size is 128, and the per-GPU micro-batch size is 4. Experiments were conducted on four NVIDIA H200 141GB GPUs.

\subsection{Supervised finetuning (SFT) post-training}

\paragraph{Data generation.}
We build SFT data from the same synthetic data-structure examples used for RL. For each generated problem, we construct a supervised prompt-response pair where the prompt contains the original data-structure question and the response contains a step-by-step solution trace followed by the final answer. For each data structure and operation type, we manually write SFT reasoning templates for insert and delete behavior. The data generator then fills these templates with the sampled values from each training example, producing a complete chain-of-thought reasoning trace that walks through the operation sequence and ends with the final serialized answer.

\paragraph{Example SFT training data.}
Below is an example array compound SFT instance with both insert and delete operations. After preprocessing, the text before \texttt{A:} is used as the user prompt, and the text after \texttt{A:} is used as the supervised assistant response.

\begin{quote}
\small
\begin{verbatim}
[User prompt]
An array supports the following operations (using 0-based indexing):
1. (insert, index=i, value=v) inserts value v at position i,
   shifting elements at i and beyond to the right.
2. (delete, index=i) removes the element currently at position i.
Start with the provided initial array and apply each operation in order.
Initial array: [52, 74, 58, 7, 63, 79, 31]
Operations:
(insert, index=1, value=52)
(delete, index=3)
...
Q: What is the final array after applying all operations?
Your answer should be a list of integers, e.g. [1, 2, 3].

Approach the problem methodically. Ensure all conclusions are based on
precise calculations and logical deductions. Feel free to explore various
solution methods and cross-check results for consistency. Maintain dynamic
thinking and always verify each step of your reasoning. The last line of
your response should be of the following format: 'Therefore, the final
answer is: $\boxed{{ANSWER}}$.' (without quotes), where ANSWER is just the
final number or expression that solves the problem. Think carefully and
break down the problem step by step.

[Assistant response]
The original array is [52, 74, 58, 7, 63, 79, 31].
The first step is insert 52 at index 1.
Inserting 52 at index 1 shifts the elements from index 1 onwards to the right.
The updated array becomes [52, 52, 74, 58, 7, 63, 79, 31].
The next step is delete at index 3.
Removing the value 58 at index 3 updates the array.
The updated array becomes [52, 52, 74, 7, 63, 79, 31].
...
Therefore, the final answer is:
$\boxed{[52, 52, 73, 77, 7, 72, 63, 79, 68, 31]}$.
\end{verbatim}
\end{quote}

\paragraph{Training.} We train for 300 optimization steps with learning rate $1\times 10^{-5}$, cosine learning-rate scheduling, batch size of 32, weight decay $0.01$, gradient clipping at $1.0$, and 10\% warmup. 

\subsection{The Berkeley Function Calling Leaderboard (BFCL) Benchmark}

We use the Berkeley Function Calling Leaderboard (BFCL) \citep{patil2025bfcl} to evaluate whether a model produces the correct API call given a request. We focus on three Python function-calling categories: \texttt{simple\_python}, \texttt{parallel}, and \texttt{parallel\_multiple}. In BFCL, each function schema specifies a reusable skill primitive. The \texttt{simple\_python} split contains atomic examples, where the user request can be solved by invoking a single function once. The \texttt{parallel} split contains compound examples that require multiple invocations of the same function schema, such as calling a weather API for several cities. The \texttt{parallel\_multiple} split further increases compositionality by providing multiple function schemas and requiring the model to select and invoke several relevant functions. Thus, \texttt{simple\_python} tests isolated skill execution, while \texttt{parallel} and \texttt{parallel\_multiple} test whether the model can coordinate multiple skill invocations within a single user request.

\paragraph{Prompt.} Each sample is converted into a chat-style prompt. The prompt gives the user request and the list of available Python functions, and asks the model to return the final function call or calls. If multiple calls are needed, the model is instructed to put one call per line. The gold answer is stored as a structured JSON payload containing the target calls and whether call order matters.

\paragraph{Data.}
We build separate atomic and compound training and test sets. The atomic training set contains 300 \texttt{simple\_python} examples. The compound training set contains 150 \texttt{parallel} examples and 150 \texttt{parallel\_multiple} examples. The test set contains 100 examples from each of the three categories, for 300 examples in total. 

\paragraph{Verification.}
BFCL answers are verified with a rule-based exact-match checker. The checker parses model outputs from several accepted formats, including Python-style function calls, JSON objects or lists, \texttt{<answer>} tags, and the final \texttt{\textbackslash boxed\{\}} answer format. Parsed calls are normalized before comparison: dictionary keys are sorted, numeric values are canonicalized, and unordered call sets are sorted when the BFCL example is not order-sensitive. A prediction receives reward 1 only if the normalized predicted function name, arguments, and call structure exactly match the normalized gold payload; otherwise it receives reward 0.

\section{Generalization of compositional reasoning} \label{sec:generalization}

In Section~\ref{sec:empirical-compositionality}, we primarily study \emph{length generalization} to evaluate whether models can extend compositional reasoning beyond the training regime. To further probe the generalization induced by composed-task training, we consider two additional transfer settings: robustness under structural distribution shift, and transfer from basic skills to more complex tasks that require unseen skills.

\subsection{Generalization under structural distribution shift} \label{sec:generalization-distribution}

We test whether a model trained on one input distribution can generalize to others while the underlying skill remains unchanged. If the model has learned the skill rather than distribution-specific patterns, its performance should remain robust under such shifts. 

\begin{figure}[h]
\centering
\includegraphics[width=0.8\linewidth]{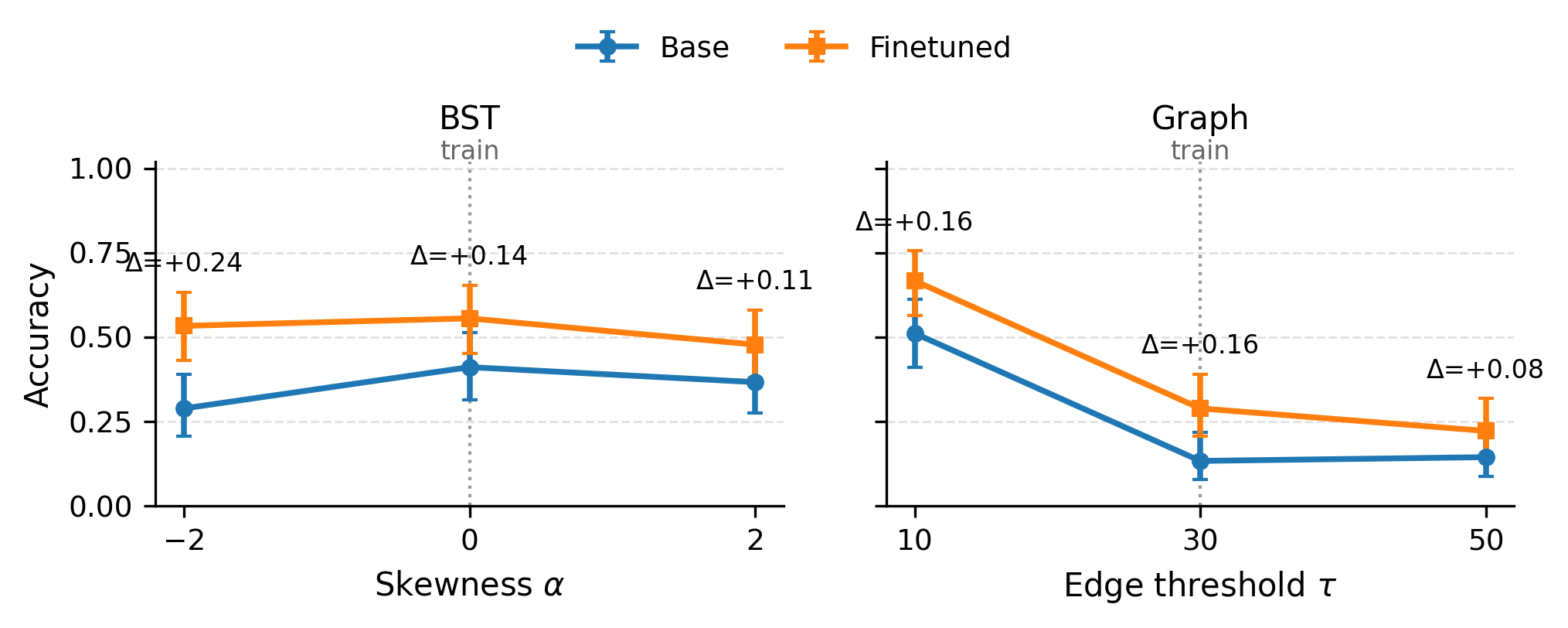}
\caption{Distribution-shift generalization on BST and graph construction. Accuracy across shifts in $\alpha$ and $\tau$. Dotted lines mark the training distribution; error bars show 95\% confidence intervals.}
\label{fig:dist}
\end{figure}

\paragraph{Distribution-shift transfer.} We study two tasks: binary search tree (BST) construction and graph construction. For BSTs, we introduce a parameter $\alpha$ to control tree skewness. When $\alpha > 0$, the generator favors values near the edges of the range, producing more unbalanced insertion orders and hence more skewed trees. When $\alpha < 0$, it favors values near the center, yielding more balanced trees. The case $\alpha = 0$ corresponds to the training distribution. For graph construction, the training distribution uses an edge threshold of 30: an edge is created when the absolute difference between two node values exceeds this threshold. At test time, lowering the threshold to 10 produces denser graphs, while increasing it to 50 produces sparser graphs.

We find that the model finetuned with composed training remains robust under distribution shift, consistently outperforming the base model across all tested distributions in both BST and graph construction. However, the improvement is not uniform across distributions. For BSTs, the gain is largest on more balanced trees and smaller on more skewed trees, suggesting that skewed trees increase execution difficulty, plausibly by requiring a longer implicit stack. For graphs, the gain is also smaller in the sparse regime than in the denser or training-like regimes. Overall, composed training transfers robustly across structural distributions, but its benefit still depends on structural properties that affect execution difficulty.

\subsection{Generalization to unseen skills} \label{sec:generalization-harder}

We study whether training on a set of basic skills enables a model to solve a more complex target task that requires skills unseen during training. We consider two settings.

\begin{figure}[h]
\centering
\begin{subfigure}[b]{0.50\linewidth}
    \centering
    \includegraphics[width=\linewidth]{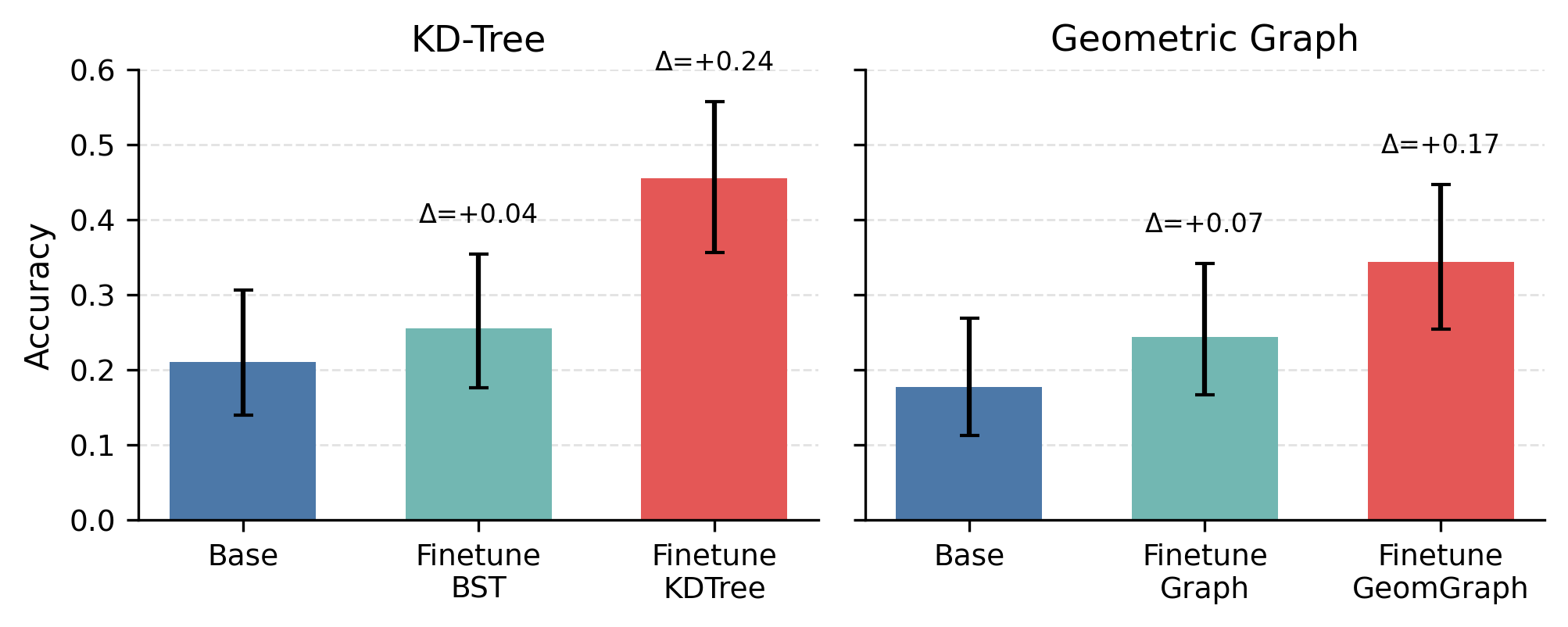}
    \caption{Transfer to structurally richer tasks requiring unseen skills.}
    \label{fig:kdtree-geom}
\end{subfigure}
\hfill
\begin{subfigure}[b]{0.45\linewidth}
    \centering
    \includegraphics[width=\linewidth]{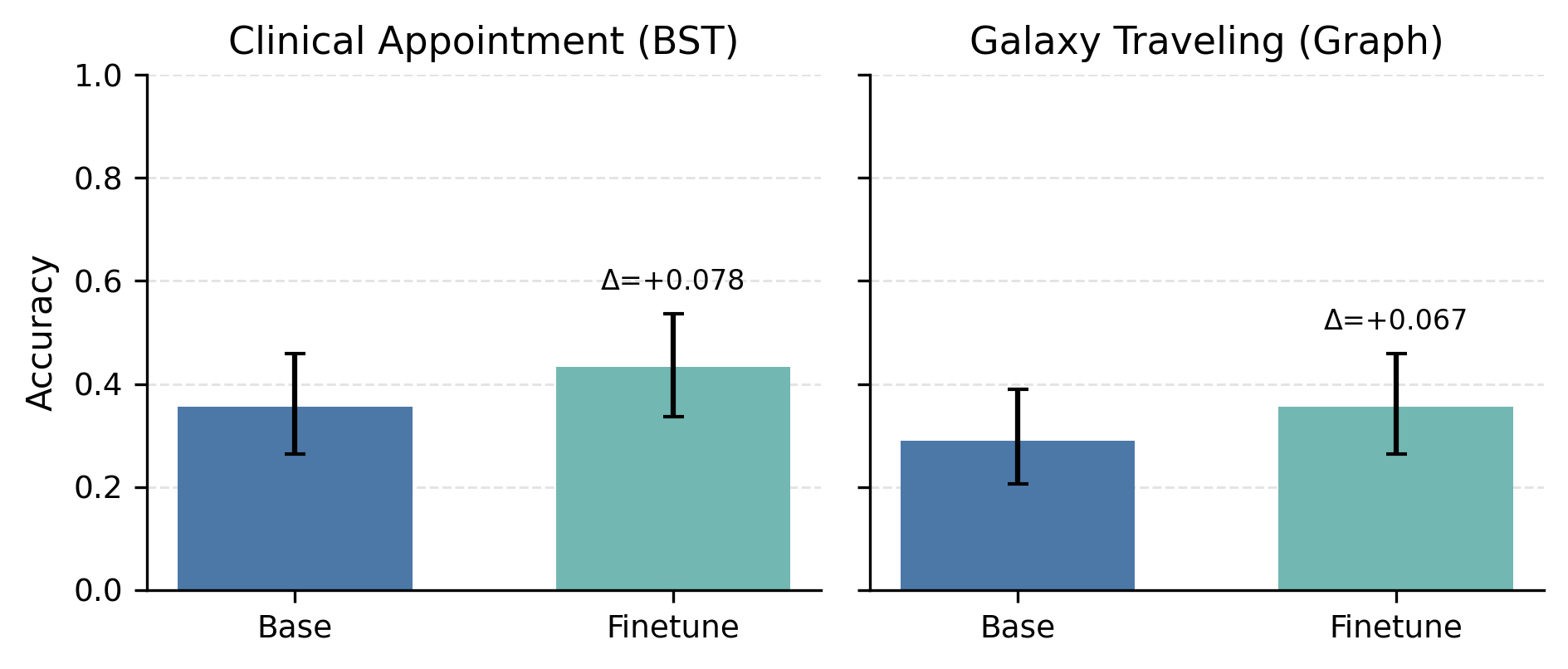}
    \caption{Transfer from formal tasks to natural-language variants.}
    \label{fig:natural}
\end{subfigure}

\caption{Left: transfer to KD-Tree and Geometric Graph construction from basic 1D skills, compared with direct fine-tuning on the target task. Right: transfer from formal tasks to natural-language BST and graph variants. Error bars show 95\% confidence intervals over test examples.}
\label{fig:hard}
\end{figure}

\textbf{Higher-dimensional transfer.} Following the spatial probe in DSR-Bench, we train on BST construction and evaluate on KD-Tree construction. Relative to BSTs, KD-Trees require an additional unseen skill: node-level splitting, where the model must partition a set of points along a specified dimension. For graph construction, we train on the 1D setting and evaluate on Geometric Graphs, a higher-dimensional variant in which edges are determined by Euclidean distance rather than absolute difference. In both cases, we use dimension $K=2$.

\textbf{Natural-language transfer.} Following the realistic probe in DSR-Bench, we embed both tasks into more realistic natural-language settings. For BSTs, the task is framed as patients calling to book clinical appointments, where each node contains a patient name and appointment time. For graphs, the task is framed as spaceship travel across a galaxy through tunnels, where tunnels are created when two planets are sufficiently close in a hyperplane. These variants require the model to map from context-rich language to the underlying formal data-structure problem, and then execute the appropriate reasoning procedure.

We find that training on basic skills does not transfer strongly to harder target tasks that require unseen skills. On KD-Tree and Geometric Graph construction, the gains from transfer are small, while direct fine-tuning on the target task yields much larger improvements. Transfer to natural-language variants is also limited, with only modest gains over the base model. This suggests that compositional transfer remains difficult when generalization requires not only recombining known skills, but also acquiring new operations or reasoning primitives.

\section{Practical implications} \label{sec:practical-guide-data}

Our observed decomposed-to-composed asymmetry has important practical implications for data design in RL post-training, especially as language models are increasingly trained to solve real-world problems that require advanced skills. Such problems usually require skills to be combined, not executed in isolation. In reasoning applications such as \emph{proof writing}, compositional reasoning is required explicitly: success does not come from applying a single theorem or algebraic manipulation, but from choosing relevant lemmas, sequencing them into a valid argument, and maintaining dependencies between intermediate claims. Similar compositional reasoning also underlies real-world agentic applications. In \emph{tool-calling agents}, the challenge is not only to learn individual API calls, such as retrieving weather, searching a database, or updating a calendar entry. The harder problem is to coordinate these calls into a valid workflow, where the output of one step determines what should happen next. Likewise, \emph{coding agents} rarely succeed by editing one line in isolation: they must locate the relevant code, reason about dependencies across functions, apply a patch, and verify that the change is consistent with the intended behavior. Our findings suggest that although individual-skill training can be easier to supervise, it may not be sufficient for tasks that require composition. Training on composed tasks provides a more relevant signal for compositional generalization, while also transferring back to individual skills. Thus, for RL post-training on real-world deployment, data should prioritize verifiable multi-step trajectories rather than only isolated skill drills.

\section{Structured output and intermediate scoring}\label{app:structured-output}

As described in Section~\ref{app:experiment-details}, we extract the final answer using a boxed output format, following \citet{guo2026g1}. To reduce errors caused by formatting issues, we additionally implement normalization and relaxation procedures, and evaluate outputs with instruction-tuned models. These design choices are based on preliminary experiments aimed at identifying the most suitable setup for evaluating compositional reasoning. Below, we discuss the rationale behind these choices.

One alternative is to use JSON-based Structured Output, a feature provided by \texttt{vllm} \citep{kwon2023vllm} that enforces outputs to follow a predefined JSON schema. This approach enables more flexible verification and allows correctness to be checked dynamically from successful executions, instead of relying on manually specified constraints such as always visiting neighbors in ascending order to guarantee a unique output. However, our preliminary experiments with Qwen3-4B-Instruct produces substantially fewer correct outputs when using Structured Output. We hypothesize that the stricter and more complex JSON formatting requirements increase the generation difficulty. Therefore, we do not use Structured Output in our final setup, and instead evaluate serialized outputs (e.g., flattening a tree with preorder traversal) for correctness checking.

Another design choice is that we use only final-output scoring for both RL reward computation and evaluation. Since each step in the data structure computation is deterministic, intermediate states can in principle also be generated and verified. However, doing so requires additional formatting constraints to programmatically extract intermediate outputs. In our preliminary experiments, we found that adding such constraints consistently reduced overall performance. One possible reason is that models may rely on dynamic reasoning processes with self-correction (e.g., ``Aha'' moments), which are disrupted by enforcing structured intermediate outputs. As a result, we evaluate only the final output correctness for our experiments, providing flexibility to the models' reasoning process.

Finally, future work could explore LLM-based judging for answer extraction, removing the need for Structured Output or explicit intermediate-state formatting altogether. This may further reduce formatting-related errors. In addition, tracking intermediate reasoning trajectories and comparing them against ground-truth states---including cases with later error correction---could provide deeper insights into the compositional reasoning abilities of language models.

%\newpage
%\input{checklist.tex}

\end{document}